\documentclass{article}
\usepackage[utf8]{inputenc}

 \usepackage[preprint]{neurips_2026}

\usepackage[utf8]{inputenc} % allow utf-8 input
\usepackage[T1]{fontenc}    % use 8-bit T1 fonts
\usepackage{hyperref}       % hyperlinks
\usepackage{url}            % simple URL typesetting
\usepackage{booktabs}       % professional-quality tables
\usepackage{amsfonts}       % blackboard math symbols
\usepackage{nicefrac}       % compact symbols for 1/2, etc.
\usepackage{microtype}      % microtypography
\usepackage{xcolor}         % colors

\usepackage[utf8]{inputenc}
\usepackage[T1]{fontenc}
\usepackage{amsmath,amssymb,amsfonts,amsthm,mathtools}
\usepackage{graphicx}
\usepackage{booktabs}
\usepackage{tabularx}
\usepackage{array}
\usepackage{multirow}
\usepackage{makecell}
\usepackage{enumitem}
\usepackage{caption}
\usepackage{subcaption}
\usepackage{algorithm}
\usepackage{algpseudocode}
\usepackage{xspace}
\usepackage{nicefrac}
\usepackage{float}
\hypersetup{hidelinks}

\newtheorem{theorem}{Theorem}
\newtheorem{proposition}{Proposition}
\newtheorem{lemma}{Lemma}

\newcommand{\M}{\mathcal{M}}
\newcommand{\D}{\mathcal{D}}
\newcommand{\B}{\mathcal{B}}
\newcommand{\E}{\mathbb{E}}
\newcommand{\KL}{\mathrm{KL}}
\newcommand{\Cat}{\mathrm{Cat}}
\newcommand{\Bern}{\mathrm{Bern}}
\newcommand{\softmax}{\mathrm{softmax}}
\newcommand{\sigmoid}{\sigma}
\newcommand{\ind}{\mathbf{1}}
\newcommand{\BSLI}{\textsc{BSLI}\xspace}

\newcommand{\EDU}{\mathrm{EDU}}
\newcommand{\AUBF}{\mathrm{AUBF}}

\newcommand{\NA}{--}

\title{Bayesian Selective Latent Inference\\for Wastewater-First Influenza Monitoring}

\author{
  Yixuan Zhang\textsuperscript{1} \quad
  Yang Song\textsuperscript{1} \quad
  Hao Wang\textsuperscript{2} \quad
  Samir Bhatt\textsuperscript{1,3*}  \quad
  Hengguan Huang\textsuperscript{1,3}\thanks{Correspondence to: Hengguan Huang <\texttt{huang.hengguan@u.nus.edu}> \\ and Samir Bhatt <\texttt{s.bhatt@imperial.ac.uk}>;\\  Hengguan Huang is the lead corresponding author}  \\[0.25em]
  \textsuperscript{1}Section of Health Data Science \& AI, Department of Public Health,\\
  University of Copenhagen, Copenhagen, Denmark\\
  \textsuperscript{2}Rutgers University, New Brunswick, NJ, USA\\
  \textsuperscript{3}MRC Centre for Global Infectious Disease Analysis, Department of Infectious Disease Epidemiology,\\
  School of Public Health, Faculty of Medicine, Imperial College London, London, United Kingdom\\[0.4em]
}

\begin{document}
\maketitle

\begin{abstract}
Wastewater influenza surveillance can reveal community circulation before clinical reporting, but wastewater alone is not a fully identifiable proxy for human burden. Existing wastewater models assume a fixed evidence set, while generic evidence-acquisition methods treat official streams as interchangeable costly features. To our knowledge, we are the first to cast wastewater-first influenza monitoring as a selective decision problem: starting from mandatory wastewater evidence, the system must decide whether wastewater is sufficient, which delayed official stream to query next, and when abstention is the only scientifically defensible action under source ambiguity. We propose Bayesian Selective Latent Inference (\BSLI), a principled Bayesian method that maintains a posterior over latent burden and identifiability, certifies answerability through explicit scientific gates, and optimizes query/stop decisions with an exact cost-calibrated Bellman policy. We prove the key variational, answerability, Bellman-optimality, and one-dimensional cost-calibration properties. On a fixed public-data benchmark with 5,933 forecasting and 3,102 source-ambiguity episodes, \BSLI improves the matched-budget cost-performance frontier while preserving conservative abstention under source ambiguity.
\end{abstract}

\section{Introduction}
Wastewater surveillance can surface community influenza circulation before many clinical indicators are released, but an early wastewater signal is not automatically a human-burden conclusion. Influenza A detections can mix human, animal, and environmental inputs, and official clinical streams arrive with different lags and reliability profiles \citep{cdc_wastewater_methods_2026,louis2024h5}. The practical question is therefore not only what to predict from wastewater, but whether wastewater is already sufficient, which delayed official stream is worth acquiring, and when abstention is scientifically safer than an answer.

Existing methods do not solve this decision problem. Wastewater-influenza studies typically use fixed-input association or prediction models. Active feature acquisition methods learn which costly variables improve predictive utility, but they treat measurements as generic features rather than as epidemiologically distinct evidence streams \citep{ma2019eddi,covert2023mmi,valancius2024aco}. Selective prediction and learning-to-defer can abstain, but they assume a fixed observed input and do not decide which official signal to query before stopping \citep{geifman2017selective,geifman2019selectivenet,mozannar2020defer,mao2023defer}. Generic tool-use agents can call tools, yet they do not provide a principled account of identifiability, scientific admissibility, and calibrated risk-cost trade-offs for surveillance \citep{karpas2022mrkl,yao2023react,schick2023toolformer}. The missing ingredient is a principled method for \emph{selective inference under source ambiguity}.

To our knowledge, we are the first to formulate wastewater-first influenza monitoring as a selective decision problem. We call this problem \emph{wastewater-first selective latent inference}: each episode starts with mandatory wastewater evidence; the learner may query ED visits, hospitalizations, laboratory positivity, or source-policy evidence; and it must stop with a phase/alert prediction or abstain. The formulation makes identifiability explicit: the system must decide not only whether a prediction is accurate, but whether a human-burden statement is scientifically admissible given the currently observed evidence.

For this problem, we propose \emph{Bayesian Selective Latent Inference} (\BSLI), a principled Bayesian method rather than a heuristic tool-use policy. \BSLI maps wastewater-only, partially queried, and full-evidence records into a common posterior over latent burden and source support. It then combines that posterior with explicit scientific gates to certify answerability, and solves a cost-calibrated finite evidence-lattice problem for query/stop control. This is precisely the structure missing from prior work: source ambiguity is represented as latent state rather than noise, abstention is part of the objective rather than a post hoc fallback, and evidence is acquired for decision value rather than generic predictive gain.

We further study the unit-alignment challenge created by adaptive acquisition. Terminal selective risk is measured in statistical free-energy units, whereas official evidence streams have operational costs. Directly adding them makes the acquisition policy a scale artifact. \BSLI learns a cost-energy multiplier on a development split, placing risk and evidence cost on a common dimensionless scale before dynamic programming. We prove the Gibbs variational form of the posterior surrogate, exact optimality of the calibrated finite-lattice recursion, constrained plug-in answerability, and the one-dimensional cost-calibration reduction.

Empirically, we evaluate \BSLI on a fixed public-data benchmark with 5,933 forecasting episodes and 3,102 source-ambiguity episodes. Wastewater is initially visible; other official streams are revealed only after tool calls. Compared with wastewater-only predictors, static workflows, adaptive acquisition baselines, and generic tool routers, \BSLI improves the matched-budget cost-performance frontier, achieves low selective risk at high coverage, and maintains conservative abstention under source ambiguity.

Our contributions are:
\begin{enumerate}%[leftmargin=*,itemsep=0.25ex,topsep=0.25ex]
    \item  To our knowledge, this is the first formulation of influenza wastewater monitoring as a query/predict/abstain problem under source ambiguity. The key shift is from fixed-input prediction to deciding when a human-burden statement is identifiable.
    \item  We introduce \BSLI, which maintains a posterior over burden and identifiability, certifies answerability with scientific gates, and computes an exact cost-calibrated Bellman policy. Unlike active acquisition or generic agents, \BSLI optimizes decision value under identifiability constraints.
    \item  We prove the core variational, answerability, Bellman-optimality, and cost-calibration properties, and evaluate on a fixed public-data benchmark with forecasting and source-ambiguity episodes, showing a stronger matched-budget frontier and conservative abstention.
\end{enumerate}
%Figure~\ref{fig:method_overview} gives the method overview.

\section{Related work}
\label{sec:related}

\subsection{Wastewater surveillance is informative, but not yet a selective decision problem}
Wastewater surveillance is increasingly used as a population-level signal for respiratory-virus monitoring because it does not depend on symptoms, care seeking, or clinical testing. Influenza-specific studies show that wastewater measurements can track respiratory-virus dynamics and align with clinical surveillance signals \citep{rector2024leuven,corchis2024michigan}. Public-health deployments have also made influenza A wastewater measurements operational at scale, with CDC methodology emphasizing quality control, site-level normalization, and aggregation across monitoring sites \citep{cdc_wastewater_methods_2026}. However, these studies and deployments mostly answer retrospective association, trend estimation, or fixed-input prediction questions. They do not formulate the downstream surveillance decision that begins once wastewater is observed.

That missing formulation matters because influenza wastewater signals are not automatically human-burden signals. During the H5N1 response, CDC reports emphasized that influenza A and H5 detections in wastewater require additional context for source interpretation, and Oregon analyses similarly showed that animal and environmental contributors can shape detections \citep{louis2024h5,falender2025h5}. Our work is positioned exactly at this gap: wastewater is the mandatory first observation, but the system must decide whether a human-burden statement is already admissible, whether specific official evidence should be acquired, or whether abstention is required. To our knowledge, prior wastewater-influenza work does not cast surveillance in this selective decision form.

\subsection{Prior acquisition, selective prediction, and agentic methods solve only parts of the problem}
Active feature acquisition methods decide which costly measurements to reveal before prediction, using information-value or surrogate-utility objectives \citep{shim2018dynamic,kachuee2019dynamic,ma2019eddi,li2021active,covert2023mmi,valancius2024aco}. This abstraction is powerful, but it is not sufficient for wastewater-first surveillance. ED visits, hospital admissions, influenza positivity, and source-policy notes are not interchangeable features: they have different release lags, reliability profiles, and scientific roles. More importantly, the objective is not merely to improve prediction after buying more features, but to determine whether a human-burden statement is identifiable at all. \BSLI therefore treats optional streams as structured surveillance evidence and optimizes acquisition against selective decision value under source ambiguity.

Selective prediction and learning-to-defer provide abstention mechanisms when a model should not answer \citep{geifman2017selective,geifman2019selectivenet,mozannar2020defer,mao2023defer}. But these methods assume a fixed observed input at decision time: they can refuse an answer, yet they do not decide which new official signal should be queried before stopping. In our setting, abstention is also semantically different from generic low confidence. A case may require abstention because source ambiguity remains unresolved, even if a classifier would otherwise appear confident. \BSLI makes that scientific admissibility constraint explicit through a latent identifiability state and answerability gates.

Tool-use and agentic AI systems show how models can call external tools and interleave reasoning with action \citep{karpas2022mrkl,yao2023react,schick2023toolformer,guo2024multiagents,wei2025cortex,xu2026factcheck}. We adopt that interface-level insight, but our positioning is deliberately different. The core challenge here is not generic tool calling; it is principled surveillance control under partial identifiability. \BSLI replaces heuristic tool sequencing with an auditable Bayesian posterior, explicit scientific admissibility checks, and a calibrated Bellman query/stop policy over a small set of official evidence streams.

\section{Problem setting}
\label{sec:problem}
Each episode $t$ begins with a mandatory wastewater input $x_t^{(0)}\in\mathcal X_0$, represented as a time-aligned state-week summary. The learner may adaptively query four optional modalities, $\M=\{\mathrm{ED},\mathrm{HOSP},\mathrm{POS},\mathrm{SRC}\}$, where a query to modality $m$ reveals a structured record $e_t^m\in\mathcal E_m$. For a queried subset $S\subseteq\M$, the available history is $H_{t,S}=\left(x_t^{(0)},\{e_t^m:m\in S\}\right)$. The supervised outcomes are a four-class phase label $y_t$ (baseline, emerging, surge, or declining) and a binary near-future alert label $b_t\in\{0,1\}$. Training uses these outcomes to fit predictors and to construct offline teacher labels for answerability, abstention, and acquisition; deployment uses the observed history $H_{t,S}$, evidence mask, reliability metadata, task embedding, and calibrated costs.

\paragraph{Agentic evidence interface.}
The agentic layer separates semantic adaptation from control. The prompt-conditioned LLM adapter encodes observed surveillance summaries into task-aware evidence embeddings, and \BSLI's Bellman selective-evidence router selects the next ED/HOSP/POS/SRC evidence tool or a stop action. This gives a closed-loop monitor whose semantic state, queried evidence, and query/stop decision are all visible at each step.

\paragraph{Offline teacher and deployable policy.}
The label-dependent free-energy score serves as a supervised teacher for train/development records. The chronological order is: fit the subset posterior using labeled training episodes; freeze it; compute answerability and Bellman teacher labels on train/development splits; distill those labels into a deployable policy; and run the policy on validation/test/deployment episodes from observed evidence and metadata. Test outcomes enter after the policy has stopped, where they score the produced prediction, acquisition sequence, and abstention decision.

\section{Bayesian selective latent inference}
\label{sec:method}
\subsection{Method overview}
\BSLI implements this LLM-augmented evidence-acquisition loop through representation, certification, and control. A frozen prompt-conditioned LLM adapter encodes observed textual summaries; a probabilistic encoder turns wastewater and queried streams into a burden/identifiability belief state; explicit scientific gates certify whether a human-burden answer is admissible; and an exact Bellman router selects the next evidence tool or a stop action. Figure~\ref{fig:method_overview} summarizes the pipeline. Each episode passes through an evidence interface in which wastewater is mandatory and ED, hospitalization, positivity, and source-policy evidence are optional. Queried textual summaries are embedded by the frozen LLM adapter and concatenated with masked structured evidence blocks. Prediction, answerability, and acquisition heads read from the same posterior state. The identifiability component is an auditable source-support representation, and the scientific gate converts that representation into an admissibility decision for human-burden statements. Query/stop decisions are labeled by the calibrated finite-lattice oracle in Algorithm~\ref{alg:calibrated_oracle}, whose proof is in Appendix~\ref{app:proof_finite_lattice}.

\begin{figure}[t]
    \centering
    \includegraphics[width=0.78\linewidth]{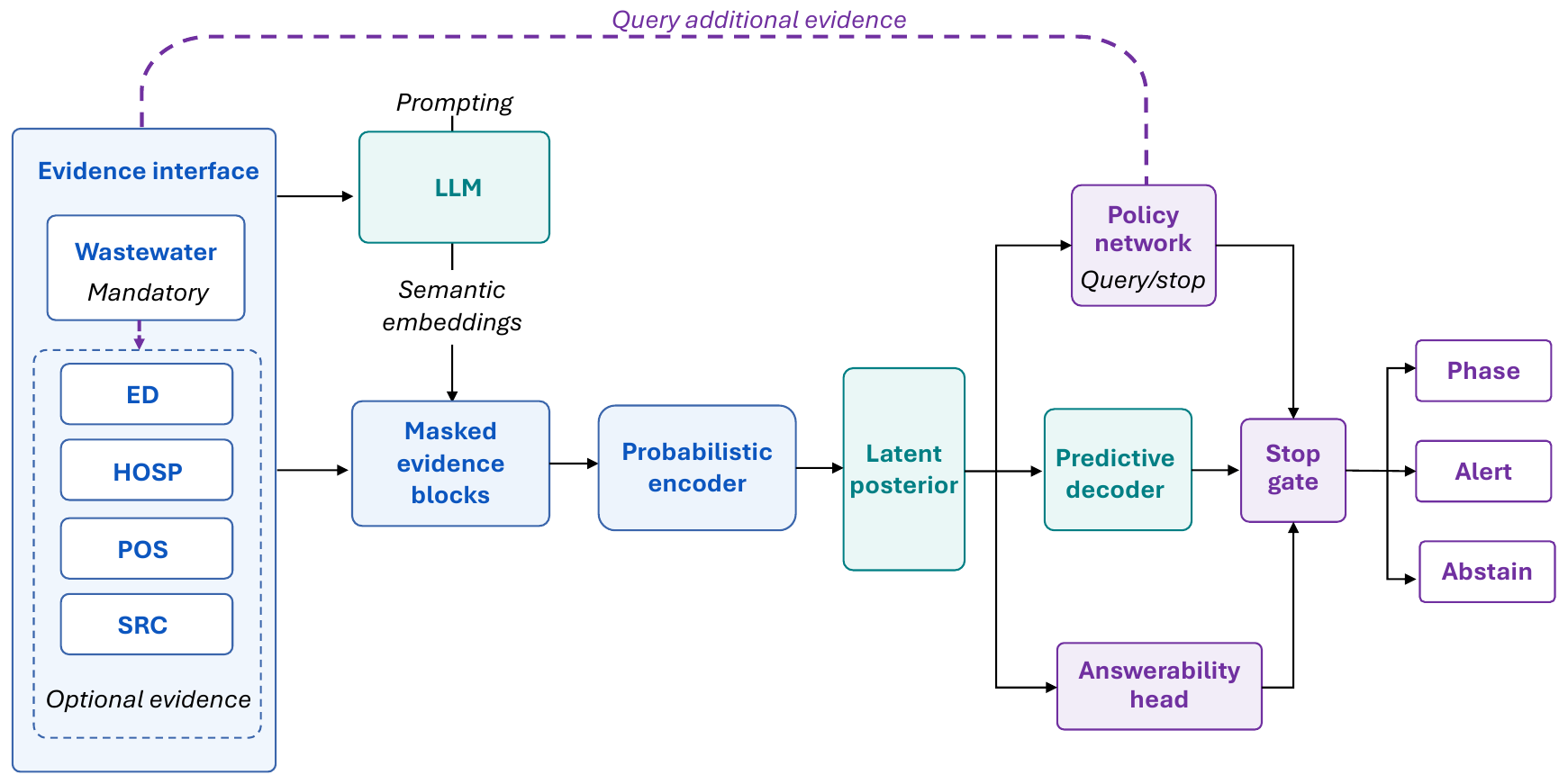}
    \caption{Method overview. \BSLI turns wastewater-first monitoring into an LLM-augmented evidence-lattice problem. A frozen prompt-conditioned LLM adapter provides semantic embeddings for observed evidence summaries; masked evidence blocks feed a probabilistic belief state; explicit scientific gates certify whether a human-burden answer is admissible; and Algorithm~\ref{alg:calibrated_oracle} formalizes the calibrated tool-evidence router.}
    \label{fig:method_overview}
\end{figure}

\subsection{Posterior surrogate and free energy}
This module maps every evidence subset to a common latent belief state, allowing wastewater-only, partial, and full-evidence cases to be compared before control.

For episode $t$, \BSLI uses the latent state $\nu_t=(z_t^{\mathrm{epi}},c_t^{\mathrm{id}})$. The component $z_t^{\mathrm{epi}}$ summarizes human influenza burden, while $c_t^{\mathrm{id}}$ summarizes whether the observed evidence supports a human-centered interpretation. For optional modality $m\in\M$, $x_t^m$ denotes structured numeric features, $\tau_t^m$ denotes a short textual or categorical summary, and $r_t^m$ denotes reliability metadata. The fixed instruction $\xi$ is embedded as $u_\xi=P_\xi E_\lambda(\xi)$, and an observed evidence summary is embedded as $v_t^m=P_eE_\lambda(\tau_t^m)$, where $E_\lambda$ is a frozen LLM text encoder and $P_\xi,P_e$ are learned projections. The prompt-conditioned LLM adapter maps queried textual/categorical evidence into task-conditioned semantic representations that condition posterior inference and evidence routing. In the experiments, structured features are standardized on the training split, categorical fields are one-hot encoded, week-of-year enters through sine/cosine features, and cached evidence embeddings are paired with coverage, staleness, missingness, and provenance metadata. The resulting episode-time state supports wastewater-only, partially queried, and full-evidence records through the same encoder.

For a queried subset $S\subseteq\M$, unqueried structured and text blocks are zero-filled, so $\widetilde x_t^m$ and $\widetilde v_t^m$ equal the observed modality representation if $m\in S$ and zero otherwise. The binary mask $\ind_S$ records which modalities have been queried. A fixed evidence combiner $\Phi$ forms the masked record
\begin{equation}
\label{eq:masked_record}
    r_{t,S}=\Phi\left(x_t^{(0)},\{\widetilde x_t^m,\widetilde v_t^m\}_{m\in\M},u_\xi,\ind_S,\{r_t^m\}_{m\in\M}\right),
\end{equation}
where $\Phi$ is the evidence combiner, $u_\xi$ is the embedded task instruction, $\ind_S$ is the queried-modality mask, and $r_{t,S}$ is the fixed-length representation of $H_{t,S}$. A masked probabilistic encoder $f_\theta$ maps this record to posterior moments: $(\mu_{t,S},\ell_{t,S})=f_\theta(r_{t,S})$ and $\sigma_{t,S}^2=\epsilon+\mathrm{softplus}(\ell_{t,S})$, where $\mu_{t,S}$ is the posterior mean, $\ell_{t,S}$ is the raw variance parameter, and $\epsilon>0$ stabilizes the variance. These moments define the amortized posterior
\begin{equation}
\label{eq:posterior_surrogate}
    q_\theta(\nu_t\mid H_{t,S})=\mathcal N\left(\mu_{t,S},\mathrm{diag}(\sigma_{t,S}^2)\right).
\end{equation}
We write $h_{t,S}=[\mu_{t,S};\log\sigma_{t,S}^2]$ for the posterior feature vector passed to downstream heads. More detailed encoder/decoder implementation is in Appendix~\ref{app:implementation}.

A probabilistic predictive decoder $g_\psi$ maps $h_{t,S}$ to phase logits $\alpha_{t,S}$ and alert logit $\zeta_{t,S}$. The predictive distribution is
\begin{equation}
\label{eq:predictive_distribution}
    p_\psi(y_t,b_t\mid H_{t,S})=\Cat(y_t;\softmax(\alpha_{t,S}))\Bern(b_t;\sigmoid(\zeta_{t,S})),
\end{equation}
where $y_t$ is the four-class phase label, $b_t$ is the binary alert label, $\alpha_{t,S}$ are phase logits, and $\zeta_{t,S}$ is the alert logit. \BSLI trains this posterior surrogate with the sampling-free free energy
\begin{equation}
\label{eq:surrogate_free_energy}
    F_{\theta,\psi}(t,S)=-\log p_\psi(y_t,b_t\mid H_{t,S})+\beta\KL\left(q_\theta(\nu_t\mid H_{t,S})\Vert q_\theta(\nu_t\mid H_{t,\emptyset})\right),
\end{equation}
where $F_{\theta,\psi}(t,S)$ is the training free energy for episode $t$ and subset $S$, $H_{t,\emptyset}$ is the wastewater-only history, and $\beta>0$ controls how strongly the posterior is penalized for moving away from that wastewater-only reference. Stage I optimizes
\begin{equation}
\label{eq:stage1_objective}
    \min_{\theta,\psi}\; \E_{t\sim\D,S\sim\rho}\left[F_{\theta,\psi}(t,S)\right],
\end{equation}
where $\D$ is the training distribution and $\rho$ is the training distribution over evidence subsets. The free-energy score in Eq.~\eqref{eq:surrogate_free_energy} is a supervised training and oracle-construction score because it contains $(y_t,b_t)$. This label-dependent score defines offline teacher targets; the deployable policy receives $h_{t,S}$, masks, reliability metadata, task embeddings, and calibrated costs. Identifiability, abstention, and acquisition are induced after this surrogate is fit and frozen.

\begin{theorem}[Gibbs variational form]
\label{thm:gibbs}
For any base distribution $q_0(\nu)$, terminal energy $\ell(\nu)$, and $\beta>0$,
\begin{equation}
    \min_q\left\{\E_q[\ell(\nu)]+\beta\KL(q\Vert q_0)\right\}
    =-\beta\log\E_{q_0}\left[\exp(-\ell(\nu)/\beta)\right],
\end{equation}
with unique minimizer $q^\star(\nu)\propto q_0(\nu)\exp(-\ell(\nu)/\beta)$ whenever the normalizer is finite.
\end{theorem}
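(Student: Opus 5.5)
The plan is the standard Donsker--Varadhan argument: complete the KL divergence against the Gibbs measure. Define the normalizer $Z=\E_{q_0}[\exp(-\ell(\nu)/\beta)]$. Assume $0<Z<\infty$; positivity holds automatically whenever $\ell$ is finite $q_0$-a.s. With that in hand, define
\begin{equation*}
q^\star(\nu)=\frac{q_0(\nu)\exp(-\ell(\nu)/\beta)}{Z},
\end{equation*}
which is a probability measure absolutely continuous with respect to $q_0$, with density $dq^\star/dq_0=e^{-\ell/\beta}/Z$. It suffices to consider $q\ll q_0$, since otherwise $\KL(q\Vert q_0)=+\infty$ and such $q$ cannot attain the minimum.

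\textbf{Key identity.} For any such $q$ with $\E_q[\ell]$ well defined, write $\log\frac{dq}{dq_0}=\log\frac{dq}{dq^\star}+\log\frac{dq^\star}{dq_0}$. Substituting $\log\frac{dq^\star}{dq_0}=-\ell/\beta-\log Z$ gives
\begin{equation*}
\KL(q\Vert q_0)=\KL(q\Vert q^\star)-\frac{1}{\beta}\E_q[\ell]-\log Z.
\end{equation*}
Multiplying by $\beta$ and adding $\E_q[\ell]$ then yields
\begin{equation*}
\E_q[\ell]+\beta\KL(q\Vert q_0)=-\beta\log Z+\beta\KL(q\Vert q^\star).
\end{equation*}

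\textbf{Conclusion.} By Gibbs' inequality, $\KL(q\Vert q^\star)\ge 0$, with equality if and only if $q=q^\star$ almost everywhere. The objective is therefore bounded below by $-\beta\log Z=-\beta\log\E_{q_0}[\exp(-\ell/\beta)]$. This bound is attained exactly at $q^\star$, which gives both the value formula and uniqueness of the minimizer.

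\textbf{Main obstacle: integrability.} The delicate step is the measure-theoretic bookkeeping, where the expectations in the key identity might be of the form $\infty-\infty$.
\begin{itemize}
\item I would first show that $\E_{q^\star}[\ell]$ and $\KL(q^\star\Vert q_0)$ are finite, or at least not of opposite infinite signs. This follows from $Z<\infty$ together with the elementary bound $x e^{-x}\le 1/e$ applied to $\ell e^{-\ell/\beta}$ on the region where $\ell$ is bounded below.
\item For a general $q$ where the objective equals $+\infty$, the inequality is trivial.
\item Otherwise, I would split $\ell$ into positive and negative parts, or use the convention that the objective is defined only when $\E_q[\ell^-]<\infty$. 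The identity can then be rearranged legitimately, since each term is finite or of a single definite sign.
\end{itemize}
I would state this convention explicitly and not belabor it further.
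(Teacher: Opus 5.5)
Your proposal is correct and is essentially the paper's own proof: tilt $q_0$ by $e^{-\ell/\beta}/Z$, restrict to $q\ll q_0$, derive $\E_q[\ell]+\beta\KL(q\Vert q_0)=-\beta\log Z+\beta\KL(q\Vert q^\star)$, and read off the value and uniqueness from Gibbs' inequality. One caution on your integrability aside (a point the paper also glosses over): $xe^{-x}\le 1/e$ controls only $\ell^+$ under $q^\star$, and for $\ell$ unbounded below one can have $Z<\infty$ but $\E_{q^\star}[\ell]=-\infty$ and $\KL(q^\star\Vert q_0)=+\infty$ (e.g.\ $\beta=1$, $\ell(x)=-x$, $q_0(dx)\propto x^{-2}e^{-x}\,dx$ on $[1,\infty)$, giving $q^\star(dx)=x^{-2}\,dx$), so your convention $\E_q[\ell^-]<\infty$ would exclude $q^\star$ and the minimum would not be attained; instead either assume $\ell$ is bounded below, as for the negative log-likelihood energies in the paper, or define the objective as $\E_q[\ell+\beta\log(dq/dq_0)]$, whose integrand equals $-\beta\log Z+\beta\log(dq/dq^\star)$ pointwise.
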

The proof is provided in Appendix~\ref{app:proof_gibbs}. The theorem gives the free-energy objective in \eqref{eq:surrogate_free_energy} a Bayesian interpretation: acquired evidence may move the posterior away from wastewater only when the free-energy gain justifies the KL movement.

\subsection{Answerability and calibrated Bellman control}
This module turns posterior belief into a safe acquisition rule. Scientific gates decide whether the current subset can support a human-burden answer, and calibrated Bellman control decides whether another evidence stream is worth its cost.

Given a fitted surrogate at parameters $(\theta_0,\psi_0)$, \BSLI defines an answerability certificate for each episode and evidence subset:
\begin{equation}
\label{eq:answerability_certificate}
    \iota^\star_{t,S}=\ind\{F_{\theta_0,\psi_0}(t,S)\le \tau_{\mathrm{id}}\}\,G_{\mathrm{sci}}(t,S).
\end{equation}
Here $\tau_{\mathrm{id}}$ is the free-energy threshold for admissible inference and $G_{\mathrm{sci}}(t,S)\in\{0,1\}$ is a scientific admissibility gate. The gate checks wastewater coverage and quality, recency of queried clinical/source streams, and source-policy compatibility for source-ambiguous influenza signals. Appendix~\ref{app:scientific_gate} gives the factorization and computation. Thus $\iota^\star_{t,S}=1$ only when the current subset $S$ predicts well and satisfies the constraints required for a human-burden statement.

The terminal selective risk for stopping at subset $S$ is
\begin{equation}
\label{eq:terminal_risk}
    R_t(S)=\iota^\star_{t,S}F_{\theta_0,\psi_0}(t,S)+(1-\iota^\star_{t,S})\tau_{\mathrm{abs}},
\end{equation}
where $\tau_{\mathrm{abs}}$ is the cost assigned to safe abstention. An answerable state pays its frozen free energy; a state outside the answerable set pays the abstention cost. Algorithm~\ref{alg:answerability_app} in Appendix~\ref{app:implementation} gives the full certificate construction, and Appendix~\ref{app:proof_answerability} gives the constrained Bayes threshold proof.

Evidence acquisition then compares terminal risk with the cost of revealing additional modalities. Because $R_t(S)$ and raw evidence costs $c_m$ are in different units, \BSLI places them on a common dimensionless scale. Let $s_R$ be a robust scale for $\log(1+R_t(S))$, let $s_c$ be a robust scale of the modality-cost vector, and let $\lambda_B>0$ be the budget-specific cost-energy multiplier. The calibrated quantities are
\begin{equation}
\label{eq:calibrated_units}
    \widetilde R_t(S)=\frac{\log(1+R_t(S))}{s_R+\epsilon},\qquad
    \widetilde c_{B,m}=\lambda_B\frac{c_m}{s_c+\epsilon},
\end{equation}
where $\epsilon>0$ prevents division by zero. Here $\widetilde R_t(S)$ is the calibrated terminal risk for stopping with subset $S$, and $\widetilde c_{B,m}$ is the calibrated cost of querying modality $m$ under budget $B$.

The calibrated query/stop value is the exact Bellman solution on the finite evidence lattice:
\begin{equation}
\label{eq:calibrated_bellman}
    V_{B,t}(S)=\min\left\{\widetilde R_t(S),\;\min_{m\in\M\setminus S}\left[\widetilde c_{B,m}+V_{B,t}(S\cup\{m\})\right]\right\},\qquad V_{B,t}(\M)=\widetilde R_t(\M).
\end{equation}
The value $V_{B,t}(S)$ is the minimum calibrated risk-to-go for episode $t$ at observed subset $S$. The first term stops with risk $\widetilde R_t(S)$; the inner minimum queries one unobserved modality at cost $\widetilde c_{B,m}$; and the boundary $V_{B,t}(\M)=\widetilde R_t(\M)$ applies when no further query is possible. Proofs for scale non-invariance, multiplier absorption under linear unit changes, and one-dimensional calibration are in Appendices~\ref{app:proof_scale_mismatch} and~\ref{app:proof_lambda_calibration}.

\begin{theorem}[Finite-lattice optimality in calibrated units]
\label{thm:finite_lattice}
For fixed terminal risks $\widetilde R_t(S)$ and fixed nonnegative calibrated costs $\widetilde c_{B,m}$, the recursion in Eq.~\eqref{eq:calibrated_bellman} returns the globally optimal stop/query policy among all finite acquisition trajectories on $\M$.
\end{theorem}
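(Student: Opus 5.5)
The proof is a standard finite-horizon dynamic-programming argument on the Boolean lattice $2^{\M}$. We will make the policy class explicit first. An acquisition trajectory from a starting subset $S$ is a finite sequence of distinct modalities $m_1,\dots,m_k\in\M\setminus S$ followed by a stop. It visits the chain $S=S_0\subset S_1\subset\cdots\subset S_k$ with $S_i=S_{i-1}\cup\{m_i\}$, and its calibrated total cost is
\[
J(\pi;S)=\sum_{i=1}^k \widetilde c_{B,m_i}+\widetilde R_t(S_k).
\]
Querying an already observed modality only adds a nonnegative cost without changing the state. We may therefore discard such repeats without loss, and every trajectory has length at most $|\M\setminus S|$. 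Because the terminal risks and costs are fixed (deterministic given $t$ and $B$), an adaptive policy from a fixed start reduces to a single trajectory. Hence it suffices to optimize over trajectories. We define $V^\star(S)=\min_\pi J(\pi;S)$, a minimum over a finite nonempty set, so it is attained.

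We will show $V_{B,t}(S)=V^\star(S)$ by downward induction on $|\M\setminus S|$. \emph{Base case} ($S=\M$): the only trajectory is to stop immediately, so $V^\star(\M)=\widetilde R_t(\M)=V_{B,t}(\M)$. This also shows the recursion in Eq.~\eqref{eq:calibrated_bellman} is well defined, since each call refers only to strictly larger subsets. \emph{Inductive step}: every trajectory from $S$ either stops at once, with cost $\widetilde R_t(S)$, or has a first query $m\in\M\setminus S$ followed by a trajectory from $S\cup\{m\}$. The total cost splits additively as $\widetilde c_{B,m}+J(\pi';S\cup\{m\})$. Minimizing first over the continuation $\pi'$ gives $\widetilde c_{B,m}+V^\star(S\cup\{m\})$, which equals $\widetilde c_{B,m}+V_{B,t}(S\cup\{m\})$ by the induction hypothesis. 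Minimizing then over the first action gives exactly the right-hand side of Eq.~\eqref{eq:calibrated_bellman}. For the reverse inequality, we exhibit the greedy policy that takes the argmin action at each visited subset. By construction its cost equals $V_{B,t}(S)$, so the bound is attained.

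The step needing most care is the reduction of the policy class, not the induction itself. We must argue that (i) repeated or redundant queries never help, which uses $\widetilde c_{B,m}\ge 0$ (guaranteed since $\lambda_B>0$, $c_m\ge 0$); (ii) adaptivity adds nothing beyond open-loop sequences, because costs and risks are deterministic functions of $(t,S)$; and (iii) the optimal cost depends only on the set $S_k$ reached, not on the order of acquisition. Point (iii) holds because the costs are additive and the terminal risk depends only on the final subset. Once these are stated, the argmin policy is globally optimal, ties are broken arbitrarily, and the computation takes $O(2^{|\M|}|\M|)$ time.
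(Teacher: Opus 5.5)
Your proposal is correct and follows essentially the same route as the paper: a backward induction on $|\M\setminus S|$, with base case $S=\M$ and an inductive step that splits on the first action (stop, or query $m$ followed by an optimal continuation from $S\cup\{m\}$). The differences are minor. You reduce the policy class to deterministic open-loop trajectories, discarding repeats via $\widetilde c_{B,m}\ge 0$, whereas the paper argues that randomized Markov policies cannot beat their best deterministic first action. You also check explicitly that following the argmin actions attains $V_{B,t}(S)$, which the paper only asserts.
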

The proof is provided in Appendix~\ref{app:proof_finite_lattice}.

\subsection{Offline learning and policy distillation}
The previous subsection defines the decision problem for fixed posterior scores, gates, scales, and a budget multiplier. Offline learning estimates these quantities on train/development records and distills the exact oracle into a deployable policy.

After Stage I fits the posterior surrogate, \BSLI freezes $(\theta_0,\psi_0)$ and evaluates $F_{\theta_0,\psi_0}(t,S)$ for all $2^{|\M|}$ subsets on labeled training/development episodes. Answerability certificates and terminal risks are then constructed by Eqs.~\eqref{eq:answerability_certificate}--\eqref{eq:terminal_risk}. The robust scales $s_R$ and $s_c$ are computed using the training split only.

For each deployment budget $B$, \BSLI selects $\lambda_B$ on the development split by exact grid search or one-dimensional line search:
\begin{equation}
\label{eq:lambda_calibration}
    \lambda_B\in\arg\min_{\lambda>0}\;\E_{t\in\D_{\mathrm{dev}}}\left[\widetilde R_t(T_{\lambda,t})\right]
    +\rho_B\left(\max\{0,\E_{t\in\D_{\mathrm{dev}}}C_{\lambda,t}-B\}\right)^2.
\end{equation}
Here $T_{\lambda,t}$ is the terminal subset reached by the hard Bellman policy under multiplier $\lambda$, $C_{\lambda,t}=\sum_{m\in T_{\lambda,t}}c_m$ is the raw operational cost, and $\rho_B$ weights budget violations. Posterior quantities, gates, and subset risks are frozen during this selection. Sweeping $B$ produces the cost-performance frontier, and manual $\lambda$ values are retained as ablations.

The calibrated acquisition oracle used for policy supervision is:
\setcounter{algorithm}{0}
\begin{algorithm}[H]
\caption{Calibrated finite-lattice acquisition oracle}
\label{alg:calibrated_oracle}
\begin{algorithmic}[1]
\Require Frozen surrogate $(\theta_0,\psi_0)$; labeled training/development episodes for offline oracle construction; raw modality costs $\{c_m\}_{m\in\M}$; budget grid $\B$; scientific gate $G_{\mathrm{sci}}$.
\Ensure Calibrated oracle labels $u^\star_{t,S,B}$ and optional soft-Q targets.
\State Evaluate $F_{\theta_0,\psi_0}(t,S)$ for every training/development episode and every subset $S\subseteq\M$.
\State Construct $\iota^\star_{t,S}$ by \eqref{eq:answerability_certificate} and terminal selective risks $R_t(S)$ by \eqref{eq:terminal_risk}.
\State Compute robust scales $s_R$ and $s_c$ using the training split only; form $\widetilde R_t(S)$.
\For{each deployment budget $B\in\B$}
    \State Learn $\lambda_B>0$ on the development split by minimizing Eq.~\eqref{eq:lambda_calibration}.
    \For{each episode $t$}
        \State Set $V_{B,t}(\M)\gets \widetilde R_t(\M)$ and $u^\star_{t,\M,B}\gets \mathrm{stop}$.
        \For{proper subsets $S\subsetneq\M$ in decreasing order of $|S|$}
            \State $Q_{\mathrm{stop}}\gets \widetilde R_t(S)$.
            \State $Q_m\gets \lambda_B c_m/(s_c+\epsilon)+V_{B,t}(S\cup\{m\})$ for all $m\in\M\setminus S$.
            \State $V_{B,t}(S)\gets \min\{Q_{\mathrm{stop}},\min_m Q_m\}$.
            \State $u^\star_{t,S,B}\gets \arg\min_{a\in\{\mathrm{stop}\}\cup(\M\setminus S)}Q_a$.
        \EndFor
    \EndFor
\EndFor
\State \Return $\{(t,S,B,u^\star_{t,S,B})\}$ and the corresponding $Q$ values.
\end{algorithmic}
\end{algorithm}

Because the exact oracle is computed offline, Stage II distills it into a fast online policy. The deployable agent has two heads: a neural policy $\pi_\eta$ over valid actions and an answerability head $a_\omega(h_{t,S})\in(0,1)$. The policy conditions on the posterior feature vector $h_{t,S}$, evidence mask $\ind_S$, task-instruction embedding $u_\xi$, and calibrated cost vector $\widetilde c_B=\{\widetilde c_{B,m}\}_{m\in\M}$ for budget $B$. With optional soft-Q targets $p_B(a\mid t,S)$ from a temperature-smoothed version of Algorithm~\ref{alg:calibrated_oracle}, Stage II minimizes
\begin{align}
\label{eq:agent_distillation}
    \mathcal J_{\mathrm{agent}}(\eta,\omega)
    =\E_{t,S,B}\Big[&-\sum_a p_B(a\mid t,S)\log\pi_\eta(a\mid h_{t,S},\ind_S,u_\xi,\widetilde c_B)\notag\\
    &-\log\Bern(\iota^\star_{t,S};a_\omega(h_{t,S}))\Big].
\end{align}
In \eqref{eq:agent_distillation}, $\eta$ and $\omega$ are the two head parameters, $p_B(a\mid t,S)$ is the oracle target over valid actions $a$, and the expectation is over Stage-II tuples $(t,S,B)$. The first term distills stop/query actions; the second trains the answerability head against \eqref{eq:answerability_certificate}. At inference time, \texttt{stop} leads to answer or abstain according to $a_\omega$; a modality action reveals that stream and reruns the same encoder/decoder on the enlarged subset. The deployed state consists of observed evidence, masks, metadata, task embedding, and calibrated costs. The optional temperature-smoothed soft-Bellman variant used only for distillation, together with its convergence bound, is provided in Appendix~\ref{app:proof_softbellman}; additional training details are in Appendix~\ref{app:implementation}.

Appendix~\ref{app:full_parity_proof} formalizes the full-evidence parity property, and Appendix~\ref{app:metrics} specifies the corresponding empirical diagnostic against the clean all-modality MLP.

\section{Experiments}
\label{sec:experiments}
\paragraph{Evaluation questions.}
The experiments test whether a wastewater-first agent can improve prediction while spending less evidence and maintaining admissible human-burden decisions under source ambiguity. We evaluate three questions: whether calibrated acquisition improves the matched-budget accuracy frontier; whether the resulting policy makes reliable query/stop and abstention decisions; and which components are responsible for the frontier.

\paragraph{Benchmark and time-aligned protocol.}
The benchmark uses five official data streams: CDC wastewater influenza A, CDC ED trajectories, weekly hospital respiratory data, CDC wastewater H5, and FluView clinical-laboratory influenza positivity exports \citep{cdc_wastewater_methods_2026,cdc_fluview_2026}. It contains 5,933 main-task state-week episodes and 3,102 source-ambiguity safety episodes. Wastewater is initially visible; ED, hospital, positivity, and source-policy evidence are revealed after tool calls. The operational cost vector is ED $=1.0$, HOSP $=1.5$, POS $=1.2$, and SRC $=0.4$. All features are constructed from episode-time information, optional blocks are zero-filled until queried, and thresholds, multipliers, and budget-specific choices are selected on the development split. The test split reports the final prediction, acquisition, and abstention metrics. Appendix~\ref{app:implementation} gives the full data interface and Appendix~\ref{app:metrics} gives budget binning and metrics.

\paragraph{Baselines and metrics.}
The full study includes wastewater-only predictors, static evidence predictors, fixed workflows, adaptive feature-acquisition methods, selective-prediction methods, and tool-routing agents; implementation cards and all rows are in Appendix~\ref{app:experiment_cards}. The main text reports representative systems: WW-XGB \citep{chen2016xgboost}, WW+ED-MLP, ED-first, EDDI \citep{ma2019eddi}, ACO \citep{valancius2024aco}, a supervised MRKL/ReAct-style router \citep{karpas2022mrkl,yao2023react}, and the clean All-Mod-MLP full-evidence reference. We report phase macro-F1, alert AUROC, alert Brier, and $\EDU=\mathrm{F1}_{\mathrm{phase}}+\mathrm{AUC}_{\mathrm{alert}}-\mathrm{Brier}_{\mathrm{alert}}-\mathrm{ECE}$, where ECE is defined in Appendix~\ref{app:metrics}. $\EDU$ is used as a post hoc summary of accuracy and calibration after cost has been controlled.

\paragraph{Cost--performance frontier.}
Table~\ref{tab:main_budget_results} compares representative operating points at their realized evidence cost. The matched-budget comparison focuses on systems that spend comparable evidence. At cost 0.933, \BSLI-low improves over WW-XGB by +0.199 macro-F1 and +0.242 $\EDU$. In the medium-cost regime, \BSLI-med improves over ACO, the strongest nongreedy acquisition baseline, by +0.098 macro-F1 and +0.157 $\EDU$ while spending less evidence cost (2.179 vs. 2.455). Compared with the clean All-Mod-MLP static reference, \BSLI-med spends roughly 47\% less evidence cost and improves macro-F1 by +0.047 on this benchmark. The full budget-binned matrix is in Appendix Table~\ref{tab:full_budget_table}.

\begin{table}[H]
\centering
\caption{Matched-budget comparison on the forecasting benchmark. Cost is mean evidence cost; higher is better for F1, AUROC, and $\EDU$, and lower is better for Brier.}
\label{tab:main_budget_results}
\footnotesize
\setlength{\tabcolsep}{3pt}
\begin{tabular}{llrrrrr}
\toprule
Family & System & Cost & F1 & AUROC & Brier & EDU \\
\midrule
No query & WW-XGB \citep{chen2016xgboost} & 0.000 & 0.725 & 0.933 & 0.094 & 1.538 \\
Static tool & WW+ED-MLP & 1.000 & 0.967 & \textbf{0.975} & 0.062 & 1.820 \\
Fixed workflow & ED-first & 2.500 & 0.862 & 0.965 & 0.073 & 1.640 \\
Adaptive FA & EDDI \citep{ma2019eddi} & 1.841 & 0.779 & 0.951 & 0.085 & 1.536 \\
Adaptive FA & ACO \citep{valancius2024aco} & 2.455 & 0.885 & 0.972 & \textbf{0.058} & 1.710 \\
Tool router & SupRouter \citep{karpas2022mrkl,yao2023react} & 3.319 & 0.876 & 0.970 & 0.062 & 1.694 \\
Full evidence & All-Mod-MLP & 4.100 & 0.936 & 0.972 & 0.061 & 1.808 \\
\midrule
\textbf{Ours} & \textbf{BSLI-low} & \textbf{0.933} & \textbf{0.924} & 0.965 & 0.068 & \textbf{1.780} \\
\textbf{Ours} & \textbf{BSLI-med} & \textbf{2.179} & \textbf{0.983} & 0.966 & 0.068 & \textbf{1.867} \\
\bottomrule
\end{tabular}
\end{table}

\paragraph{Decision quality and safe abstention.}
Table~\ref{tab:decision_main} evaluates whether the policy selects the right evidence sequence and stops when the available evidence is admissible. \BSLI-med attains the best budget-frontier utility ($\AUBF=1.867$), the lowest selective risk at 80\% coverage ($\GRisk@80=0.105$), and the strongest alignment with the calibrated oracle sequence (0.599). This combination reflects the joint design of posterior answerability, scientific admissibility, and calibrated Bellman control.

\begin{table}[H]
\centering
\caption{Decision-quality comparison. Higher is better for $\AUBF$ and sequence alignment; lower is better for $\GRisk@80$.}
\label{tab:decision_main}
\footnotesize
\setlength{\tabcolsep}{3pt}
\renewcommand{\arraystretch}{1.05}
\begin{tabular}{lrrrr}
\toprule
System & Cost & $\AUBF\,\uparrow$ & $\GRisk@80\,\downarrow$ & Seq. align. $\uparrow$ \\
\midrule
WW logistic & 0.000 & 1.440 & 0.900  & \NA \\
All-Mod-MLP & 4.100 & 1.808 & 0.106  & \NA \\
ED-first (B2) & 2.500 & 1.640 & 0.874  & \NA \\
EDDI \citep{ma2019eddi} (B2) & 1.841 & 1.536 & 0.206  & 0.469 \\
GDFS \citep{covert2023mmi} (B2) & 2.294 & 1.708 & 0.122  & 0.365 \\
ACO \citep{valancius2024aco} (B2) & 2.455 & 1.710 & 0.124  & 0.355 \\
SupRouter \citep{karpas2022mrkl,yao2023react} & 3.319 & 1.604 & 0.143  & 0.323 \\
ReAct-router \citep{yao2023react} & 0.472 & 1.302 & 0.356  & 0.056 \\
\textbf{BSLI-med} & 2.179 & \textbf{1.867} & \textbf{0.105} & \textbf{0.599} \\
\bottomrule
\end{tabular}
\end{table}

\paragraph{Frontier shape.}
Figure~\ref{fig:frontier_selected} visualizes the same matched-budget story. The wastewater-only point is cheap but leaves a large accuracy gap; the all-modality reference is accurate but pays the full evidence cost; and static or generic acquisition baselines occupy isolated operating points. \BSLI traces a more favorable frontier because the calibrated oracle can buy evidence only when the posterior risk decrease justifies the cost. This is the behavior desired in wastewater-first monitoring: early episodes can often stop from wastewater, while ambiguous or high-uncertainty episodes selectively acquire delayed official streams.

\begin{figure}[H]
    \centering
    \includegraphics[width=0.6\linewidth]{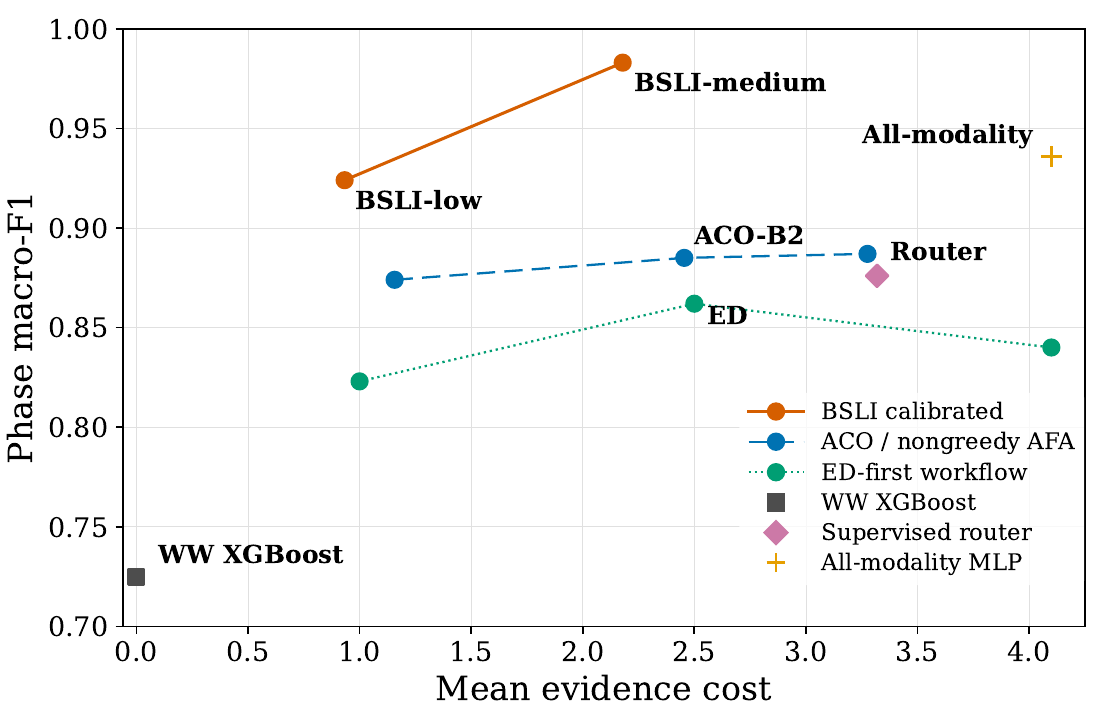}
    \caption{Cost--accuracy frontier for representative operating points.}
    \label{fig:frontier_selected}
\end{figure}

\paragraph{Ablations.}
Table~\ref{tab:main_ablation_suite} isolates the main mechanisms. The probabilistic encoder keeps the policy in a selective-evidence regime, the policy network expresses the value of delayed streams, the LLM adapter improves evidence representation, and cost calibration aligns statistical and operational units. The calibrated full system reaches $\EDU=1.867$ at cost 2.179; the uncalibrated variant illustrates the scale-mismatch analysis in Appendix~\ref{app:proof_scale_mismatch} with $\EDU=0.560$ at cost 0.000.

\begin{table}[H]
\centering
\caption{Component ablations for the calibrated BSLI operating point.}
\label{tab:main_ablation_suite}
\footnotesize
\setlength{\tabcolsep}{3pt}
\begin{tabular}{lrrrrrr}
\toprule
Variant & Cost & F1 & AUROC & Brier & ECE & EDU \\
\midrule
Full BSLI  & 2.179 & 0.983 & 0.966 & 0.068 & 0.014 & 1.867 \\
w/o probabilistic encoder & 4.100 & 0.930 & 0.965 & 0.067 & 0.040 & 1.787 \\
w/o policy network & 0.000 & 0.670 & 0.922 & 0.111 & 0.108 & 1.374 \\
w/o LLM adapter & 2.263 & 0.963 & 0.969 & 0.069 & 0.023 & 1.840 \\
w/o cost calibration  & 0.000 & 0.150 & 0.632 & 0.191 & 0.032 & 0.560 \\
\bottomrule
\end{tabular}
\end{table}

\paragraph{Policy interpretation.}
The resulting policy has an interpretable audit trail: source-policy evidence is valuable when it changes admissibility, ED or positivity evidence is valuable when it changes near-term burden belief, and wastewater-only stopping is selected when the posterior is sharp and answerable. Additional safety, sequence, and budget-bin diagnostics are reported in Appendix~\ref{app:experiment_cards}.

\section{Discussion}
\BSLI changes the modeling target from fixed-input wastewater prediction to selective surveillance decision-making. The relevant question is not only whether wastewater can correlate with future influenza burden, but whether the currently observed evidence is sufficient to support a human-burden statement. This is why our setting is not solved by standard active feature acquisition or generic tool-use agents: those methods optimize predictive utility over costly inputs, but they do not make identifiability and scientific admissibility first-class parts of the decision rule. In \BSLI, source ambiguity is represented explicitly in the latent state, abstention is part of the optimal policy, and each official stream is queried for its epidemiological decision value rather than as a generic feature.

The method also yields an operationally readable interface. The posterior summarizes burden and source support, the scientific gate records whether a human-centered interpretation is admissible, and the Bellman router converts each pending stream into a calibrated risk-cost comparison. A weekly reviewer can therefore inspect one short audit trace: what evidence was observed, why the episode was answerable or not, and why another query was or was not worth its cost. This makes the cost-performance frontier actionable rather than purely descriptive: low-budget operation preserves much of the wastewater-first signal with sparse evidence acquisition, while higher-budget operation spends additional official evidence only on episodes whose posterior uncertainty justifies it.

Our claims remain deliberately bounded. The benchmark fixes a public-data, finite-modality, weekly decision setting with fixed evidence prices and a finite query lattice. That bounded scope is a feature for evaluation because it makes policies directly comparable and auditable, but it is not the full surveillance problem faced in practice. Real deployments may require nonstationary evidence costs, richer source-policy signals, different abstention thresholds, and online recalibration as reporting pipelines change. Extending the same selective-decision view to other pathogens, richer action sets, or live operational settings is an important next step.

\section{Conclusion}
We introduced wastewater-first selective latent inference and, to our knowledge, provided the first formulation of wastewater-first influenza monitoring as a selective decision problem. \BSLI is a principled Bayesian method that makes burden, identifiability, and scientific admissibility explicit, then uses exact cost-calibrated Bellman control to decide when wastewater is sufficient, which official stream to query, and when abstention is the scientifically correct action. On a fixed public-data benchmark, this yields a stronger matched-budget frontier while preserving conservative behavior under source ambiguity. More broadly, the paper positions wastewater surveillance not as a fixed-input prediction task, but as an auditable selective-decision layer over existing public-health evidence streams.

\clearpage
\bibliographystyle{plainnat}
\bibliography{main}

\clearpage
\appendix

\section{Implementation details and auxiliary algorithms}
\label{app:implementation}

\paragraph{Data interface and time alignment.}
Each example is a state-week episode. The evidence interface has one mandatory block, wastewater influenza A, and four optional blocks: emergency-department visits (ED), hospitalizations (HOSP), laboratory positivity (POS), and source-policy evidence (SRC). All features are constructed from information available at the episode time. Optional blocks are initially masked; queried blocks are revealed through tool calls, while unqueried blocks are zero-filled and paired with a modality mask, missingness indicators, staleness indicators, coverage metadata, and provenance flags. Train, development, and test splits are fixed before model selection. Development episodes set cost multipliers, thresholds, and calibration choices; test episodes provide final reporting.

\paragraph{Target and split audit.}
Phase and alert labels are constructed once before model training from the supervised outcome streams and then treated as fixed targets. The evidence-acquisition mask controls which streams the model may read as predictors while the target definition remains fixed. The experiment record for each episode contains the state-week key, split identifier, target labels, source-ambiguity flag, per-modality availability/staleness/quality metadata, and the fixed provenance fields used by $G_{\mathrm{sci}}$. Oracle labels are computed on labeled training/development records and distilled into the online policy; final test metrics score the evidence sequence and answer/abstain decision produced by that policy.

\paragraph{Evidence representation and LLM adapter.}
The fixed block order is WW, ED, HOSP, POS, and SRC. Structured features are standardized on the training split, categorical fields are one-hot encoded, and week-of-year is encoded with sine/cosine features. When an optional evidence stream has a short textual or categorical summary, a frozen E5-Mistral-7B-Instruct encoder maps it to an embedding. The task instruction is projected to 16 dimensions and each queried evidence summary is projected to 64 dimensions. Unqueried evidence embeddings are zero-filled. The LLM is used as a cached prompt-conditioned semantic adapter over observed summaries: it supplies task-conditioned evidence representations, and tool choice is handled by the calibrated \BSLI Bellman router. Frozen LLM parameters are excluded from trainable BSLI parameter counts.

\paragraph{Posterior surrogate.}
The terminal surrogate is a masked Gaussian encoder followed by deterministic NPN-style moment heads. The encoder consumes the masked evidence vector $r_{t,S}\in\mathbb{R}^{d_r}$ and uses a residual MLP with hidden width 128, layer normalization, dropout 0.10, and two residual blocks. For each subset state $S$, it outputs a 16-dimensional posterior mean and a 16-dimensional raw log-variance,
\[
    f_\theta(r_{t,S})=(\mu_{t,S},\log\sigma^2_{t,S}),\qquad
    \mu_{t,S},\log\sigma^2_{t,S}\in\mathbb{R}^{16}.
\]
The decoder takes $[\mu_{t,S};\log\sigma^2_{t,S}]\in\mathbb{R}^{32}$ and propagates moments to the four-class phase distribution and binary alert probability. The surrogate is trained for 1000 gradient steps with batch size 64, Adam learning rate $10^{-3}$, and KL weight $\beta_{\mathrm{KL}}=0.02$. Evidence subsets are sampled with $(p_\emptyset,p_{\mathrm{partial}},p_{\mathrm{full}})=(0.1,0.3,0.6)$, which preserves wastewater-only states while emphasizing full-evidence parity. The checkpoint is selected on the development split.

\paragraph{Scientific admissibility gate.}
\label{app:scientific_gate}
The main text writes a single gate $G_{\mathrm{sci}}(t,S)$ to keep the certificate in Eq.~\eqref{eq:answerability_certificate} readable. In the experiments, this gate is computed as
\begin{equation}
\label{eq:gate_appendix}
    G_{\mathrm{sci}}(t,S)
    =G_{\mathrm{cov}}(t,S)\,G_{\mathrm{src}}(t,S)\,G_{\mathrm{safe}}(t,S),
\end{equation}
where all factors are binary and are fixed before test evaluation. The coverage factor checks that the mandatory wastewater block passes site-coverage and quality thresholds and that every queried optional stream used for an answer falls within its recency window:
\[
    G_{\mathrm{cov}}(t,S)
    =\ind\{q_t^0\ge \gamma_0\}
      \prod_{m\in S\cap\{\mathrm{ED},\mathrm{HOSP},\mathrm{POS},\mathrm{SRC}\}}
      \ind\{q_t^m\ge \gamma_m,\; d_t^m\le L_m\}.
\]
Here $q_t^0$ is the wastewater coverage/quality score, $q_t^m$ is the quality score for queried modality $m$, $d_t^m$ is its staleness in weeks, and $(\gamma_m,L_m)$ are development-fixed thresholds.

The source factor maps a source-ambiguous wastewater signal to an answerable human-burden state when compatible source-policy evidence is available. Let $a_t\in\{0,1\}$ indicate that the episode has a source-ambiguity trigger, for example an H5 or nonhuman-source flag from wastewater/source metadata. Let $\phi_{\mathrm{src}}(H_{t,S})\in\{0,1\}$ be the source-policy compatibility indicator computed from the SRC block when SRC has been queried, and set $\phi_{\mathrm{src}}(H_{t,S})=0$ for ambiguous episodes when SRC is absent. Then
\[
    G_{\mathrm{src}}(t,S)=\ind\{a_t=0\}+\ind\{a_t=1,\;\mathrm{SRC}\in S,\;\phi_{\mathrm{src}}(H_{t,S})=1\}.
\]
Because the two cases are disjoint, $G_{\mathrm{src}}$ is binary. The safety factor is a stricter overclaim guard for source-ambiguity episodes. Let $\pi_{\mathrm{human}}(t,S)$ be the human-source compatibility score obtained from source metadata and the frozen posterior features; if SRC is unqueried in an ambiguous episode, this score is set to zero. With threshold $\gamma_{\mathrm{safe}}$ selected on the development split,
\[
    G_{\mathrm{safe}}(t,S)=\ind\{a_t=0\}+\ind\{a_t=1,\;\pi_{\mathrm{human}}(t,S)\ge \gamma_{\mathrm{safe}}\}.
\]
Thus $G_{\mathrm{sci}}=1$ when the evidence is recent and covered, source evidence is compatible with a human-burden interpretation under ambiguity, and the safety score clears the development-fixed threshold. States outside this admissible set route to abstention through Eq.~\eqref{eq:terminal_risk}. The gate uses coverage, staleness, source-policy, and provenance metadata; label-dependent free energy enters through the offline oracle risk used for supervised distillation.

\paragraph{Calibrated oracle.}
After terminal training, $(\theta_0,\psi_0)$ is frozen and $F_{\theta_0,\psi_0}(t,S)$ is evaluated for all $2^4$ subsets on labeled training and development episodes. Answerability certificates use Eq.~\eqref{eq:answerability_certificate} and the gate above. Terminal risk uses abstention cost $\tau_{\mathrm{abs}}=3.0$. The resulting oracle actions are stored as supervision for policy distillation, and the test-time policy uses the distilled heads. Terminal risks are mapped into calibrated decision units using the training-split log-risk scale, while raw evidence costs are scaled separately for each deployment budget. Budget-specific multipliers $\lambda_B$ are selected on the development split from the fixed grid
\[
\{0.02,0.05,0.075,0.1,0.15,0.2,0.25,0.35,0.5,0.75,
1,1.5,2,3,5,8,13,21,34,55,89\}.
\]
The benchmark budget grid is $B0=0.0$, $B1=1.2$, $B2=2.5$, and $B3=4.1$ under the operational-default costs ED $=1.0$, HOSP $=1.5$, POS $=1.2$, and SRC $=0.4$.

\paragraph{Policy distillation.}
The Stage-II policy distills the calibrated finite-lattice oracle. In the i16/e64 configuration, its input dimension is
\[
\begin{aligned}
&2\times16\; (\mu,\log\sigma^2) +4\; (\text{subset mask})+16\; (\text{task embedding}) \\
&\quad +4\; (\text{budget-cost features})+4\; (\text{reliability summary})=60.
\end{aligned}
\]
The policy network is a residual MLP with hidden width 128, dropout 0.10, one residual block, a five-way action head over query/stop actions, and a binary answerability head. Invalid already-queried actions are masked; stop is always valid. The policy is trained for 1000 gradient steps with batch size 64 and Adam learning rate $10^{-3}$. The loss combines hard oracle action distillation, answerability supervision, and a soft-Q stabilization term for near-tie states. Balanced action sampling is used only during training; all validation and test metrics are computed on the original episode distribution. A small action-logit correction is selected on the development split from $\{0,0.025,0.05,0.075,0.1,0.125,0.15,0.2\}$.

\begin{table}[H]
\centering
\caption{BSLI implementation card. Dimensions refer to the final i16/e64 configuration. The frozen E5-Mistral encoder produces cached evidence embeddings and is excluded from trainable parameter counts.}
\label{tab:bsli_implementation_card}
\scriptsize
\setlength{\tabcolsep}{3pt}
\begin{tabularx}{\linewidth}{p{0.20\linewidth}p{0.25\linewidth}p{0.18\linewidth}X}
\toprule
Component & Input & Architecture & Output \\
\midrule
Prompt-conditioned LLM evidence adapter &
Task text; queried evidence summaries &
Frozen E5-Mistral-7B-Instruct; projection to 16 task dimensions and 64 evidence dimensions &
Cached task and evidence embeddings; semantic adaptation of observed summaries; tool routing handled by the \BSLI Bellman router. \\

Terminal encoder &
Masked lattice vector \(r_{t,S}\in\mathbb{R}^{d_r}\) &
Residual MLP, hidden width 128, two residual blocks, layer norm, dropout 0.10 &
\(\mu_{t,S}\in\mathbb{R}^{16}\), \(\log\sigma^2_{t,S}\in\mathbb{R}^{16}\). \\

NPN decoder &
\([\mu_{t,S};\log\sigma^2_{t,S}]\in\mathbb{R}^{32}\) &
Deterministic NPN-style moment head &
Four-class phase distribution and one binary alert probability. \\

Policy network &
60-dimensional policy state &
Residual MLP, hidden width 128, one residual block, dropout 0.10 &
Five query/stop logits and one answerability probability. \\
\bottomrule
\end{tabularx}
\end{table}

\paragraph{Baseline implementation.}
All baselines use the same train/dev/test split, cost profile, and budget bins as \BSLI. Budget-aware baselines select penalties or target operating points on the development split. Baseline inputs are their native feature sets and training signals, with \BSLI posterior moments, free-energy values, scientific-gate values, answerability certificates, and oracle labels reserved for the proposed method.

Static neural predictors and terminal predictors for acquisition baselines use parameter-matched one-hidden-layer ReLU MLPs with 1024 hidden units. For evidence scope $S$, let $d_S$ be the post-preprocessing input dimension. Separate MLPs produce the four-class phase distribution and the binary alert probability:
\[
    d_S\rightarrow1024\rightarrow4,\qquad d_S\rightarrow1024\rightarrow1.
\]
The two-predictor parameter count is $P_S=(2d_S+7)\cdot1024+5$. For the full-evidence all-modality predictor, $d_{\mathrm{all}}=86$, giving $183{,}301$ trainable parameters. This approximately matches the trainable part of \BSLI while excluding the frozen evidence encoder.

Wastewater-only baselines include logistic regression, XGBoost, and a lag-based MLP. Static evidence baselines train ordinary supervised predictors on fixed scopes such as WW+ED, WW+HOSP, WW+POS, WW+SRC, or all modalities. Fixed workflow baselines query modalities in pre-specified orders and then use the corresponding terminal subset predictor. SelectiveNet uses a shared encoder with phase, alert, and selection heads; its selection threshold is chosen on the development split. EDDI and GSM use modality-block partial VAEs, GDFS estimates conditional mutual information over the finite subset space, and ACO searches the same subset lattice as \BSLI. The supervised tool router uses deterministic masked features for query/stop action classification; the ReAct-style router uses a frozen local Qwen2.5-7B instruction-tuned model only to choose tool actions, with final predictions made by the same subset predictors.

\begin{table}[H]
\centering
\caption{Baseline neural implementation card. Terminal MLP predictors use 1024 hidden units so that acquisition baselines are not disadvantaged by weak terminal models. Input dimensions $d_S$ vary by evidence scope.}
\label{tab:baseline_implementation_card}
\scriptsize
\setlength{\tabcolsep}{3pt}
\begin{tabularx}{\linewidth}{p{0.21\linewidth}p{0.23\linewidth}p{0.23\linewidth}X}
\toprule
Baseline family & Input & Neural architecture & Output / role \\
\midrule
WW lag MLP &
\(d_{\mathrm{WWlag}}\)-dimensional wastewater lag features &
Separate ReLU MLPs:
\(d_{\mathrm{WWlag}}\!\rightarrow\!1024\!\rightarrow\!4\) and
\(d_{\mathrm{WWlag}}\!\rightarrow\!1024\!\rightarrow\!1\) &
Static no-query phase and alert prediction. \\

Single-tool MLPs &
\(d_S\)-dimensional fixed scopes WW+ED, WW+HOSP, WW+POS, or WW+SRC &
Separate ReLU MLPs:
\(d_S\!\rightarrow\!1024\!\rightarrow\!4\) and
\(d_S\!\rightarrow\!1024\!\rightarrow\!1\) &
Static single-tool phase and alert prediction. \\

All-Mod-MLP &
\(d_{\mathrm{all}}=86\) full-evidence features &
Separate ReLU MLPs:
\(86\!\rightarrow\!1024\!\rightarrow\!4\) and
\(86\!\rightarrow\!1024\!\rightarrow\!1\) &
Full-evidence static reference; \(183{,}301\) trainable parameters. \\

ACO/GDFS subset predictors &
\(d_S\)-dimensional WW plus queried subset \(S\) &
One pair of 1024-hidden-unit phase/alert MLPs for each of the \(2^4\) subset states &
Terminal predictors for adaptive acquisition baselines. \\

SelectiveNet &
Fixed evidence feature vector &
Two-layer ReLU encoder with phase, alert, and selection heads &
Selective prediction with development-selected coverage threshold. \\

EDDI / GSM &
Modality-block evidence representation &
Partial VAE with latent dimension 16; GSM adds actor-critic acquisition policy &
Generative active feature acquisition baselines. \\

Supervised router &
Deterministic masked features &
MLP action classifier plus parameter-matched subset predictors &
Query/stop router without BSLI posterior or free energy. \\
\bottomrule
\end{tabularx}
\end{table}

\paragraph{Compute.}
All reported neural experiments were run on a single NVIDIA A100 GPU. Frozen LLM evidence embeddings were cached and reused across runs; neither \BSLI nor the baselines fine-tune the frozen evidence encoder.

\renewcommand{\thealgorithm}{A.\arabic{algorithm}}
\renewcommand{\theHalgorithm}{A.\arabic{algorithm}}
\setcounter{algorithm}{0}
\begin{algorithm}[H]
\caption{Answerability certificate construction}
\label{alg:answerability_app}
\begin{algorithmic}[1]
\Require Frozen surrogate $(\theta_0,\psi_0)$; histories $H_{t,S}$; threshold $\tau_{\mathrm{id}}$; gate inputs and thresholds defining $G_{\mathrm{sci}}$.
\Ensure Certificates $\iota^\star_{t,S}$ and abstention labels $g^\star_{t,S}$.
\For{each episode $t$ and subset $S\subseteq\M$}
    \State Evaluate $F_{\theta_0,\psi_0}(t,S)$ by Eq.~\eqref{eq:surrogate_free_energy}.
    \State Compute $G_{\mathrm{sci}}(t,S)$ by Eq.~\eqref{eq:gate_appendix}.
    \State $\iota^\star_{t,S}\gets \ind\{F_{\theta_0,\psi_0}(t,S)\le\tau_{\mathrm{id}}\}\,G_{\mathrm{sci}}(t,S)$.
    \State $g^\star_{t,S}\gets 1-\iota^\star_{t,S}$.
\EndFor
\State \Return $\{(t,S,\iota^\star_{t,S},g^\star_{t,S})\}$.
\end{algorithmic}
\end{algorithm}

\paragraph{Optional soft-Bellman relaxation.}
For soft-Q distillation, define for temperature $\tau_Q>0$
\[
Q^\tau_{B,t}(\mathrm{stop},S)=\widetilde R_t(S),\qquad
Q^\tau_{B,t}(m,S)=\widetilde c_{B,m}+V^\tau_{B,t}(S\cup\{m\}),
\]
\[
V^\tau_{B,t}(S)=-\tau_Q\log\sum_{a\in\{\mathrm{stop}\}\cup(\M\setminus S)}\exp\left(-Q^\tau_{B,t}(a,S)/\tau_Q\right).
\]
The oracle target is $p_B(a\mid t,S)=\softmax(-Q^\tau_{B,t}(a,S)/\tau_Q)$. This relaxation is used only for stable distillation near action ties; hard policy labels still come from Algorithm~\ref{alg:calibrated_oracle}.

\section{Proofs and theoretical details}
\label{app:proofs}

This appendix gives complete proofs for the variational identity, the finite-lattice oracle, the full-evidence parity statement, the answerability certificate, and the scale-calibration claims. The arguments are intentionally stated at the level of the finite benchmark used in the paper: an episode $t$ is fixed, the optional modality set $\M$ is finite, and each query deterministically reveals one previously hidden evidence block for that episode. Randomization is allowed in policies, but the objective is an expected scalar risk, so a deterministic minimizer always exists.

\subsection{Notation for acquisition policies}
\label{app:policy_notation}
For a subset $S\subseteq\M$, define the valid action set
\[
    \mathcal A(S)=\{\mathrm{stop}\}\cup(\M\setminus S).
\]
If action $a=\mathrm{stop}$ is selected, the episode terminates and pays terminal risk $\widetilde R_t(S)$. If $a=m\in\M\setminus S$ is selected, the next state is $S\cup\{m\}$ and the immediate cost is $\widetilde c_{B,m}\ge0$. A possibly randomized Markov acquisition policy is a collection of distributions $\pi(\cdot\mid S)$ over $\mathcal A(S)$. Because $|S|$ strictly increases after every query, the state graph is acyclic and every trajectory stops after at most $|\M|-|S|$ queries even if it stops only at $S=\M$.

For a policy $\pi$, let $\mathcal T_\pi$ be the random terminal subset and let $\mathcal Q_\pi$ be the random ordered set of queried modalities before stopping. Its calibrated risk-cost objective from state $S$ is
\begin{equation}
    J^{\pi}_{B,t}(S)
    =\mathbb E_\pi\left[
        \sum_{m\in\mathcal Q_\pi}\widetilde c_{B,m}
        +\widetilde R_t(\mathcal T_\pi)
        \;\middle|\;S_0=S
    \right].
    \label{eq:app_policy_value}
\end{equation}
The expectation is only over policy randomization; the evidence revealed by a query is fixed by the episode $t$. Since $\mathcal A(S)$ is finite, the minimum over randomized first actions is attained by at least one deterministic action. Thus it is enough to prove Bellman optimality for deterministic first actions; randomized policies cannot improve on the minimum of their support.

\subsection{Proof of Theorem~\ref{thm:gibbs}: Gibbs variational form}
\label{app:proof_gibbs}
\begin{proof}
Let $(\Omega,\mathcal F)$ be the latent sample space of $\nu$, let $q_0$ be a probability measure on this space, and let $\ell:\Omega\to\mathbb R\cup\{+\infty\}$ be measurable. Assume
\[
    Z=\int \exp\{-\ell(\nu)/\beta\}\,dq_0(\nu)
\]
satisfies $0<Z<\infty$. Define the tilted probability measure $q^\star$ by its Radon--Nikodym derivative with respect to $q_0$,
\begin{equation}
    \frac{dq^\star}{dq_0}(\nu)
    =\frac{\exp\{-\ell(\nu)/\beta\}}{Z}.
    \label{eq:qstar_density}
\end{equation}
The normalizer condition guarantees that this density integrates to one.

Consider any distribution $q$ for which the objective is finite. If $q$ is not absolutely continuous with respect to $q_0$, then $\KL(q\Vert q_0)=+\infty$ and $q$ cannot improve the optimum. Hence restrict attention to $q\ll q_0$. Since $q^\star\ll q_0$ and $q^\star$ has positive density wherever $\exp\{-\ell/\beta\}>0$, the following identity is valid whenever the left side is finite:
\begin{align}
    \KL(q\Vert q^\star)
    &=\int \log\left(\frac{dq}{dq^\star}\right)dq  \\
    &=\int \log\left(\frac{dq}{dq_0}\right)dq
      -\int \log\left(\frac{dq^\star}{dq_0}\right)dq  \\
    &=\KL(q\Vert q_0)
      -\int\left(-\frac{\ell(\nu)}{\beta}-\log Z\right)dq(\nu)  \\
    &=\KL(q\Vert q_0)+\frac{1}{\beta}\mathbb E_q[\ell(\nu)]+\log Z .
\end{align}
Rearranging gives the exact variational decomposition
\begin{equation}
    \mathbb E_q[\ell(\nu)]+\beta\KL(q\Vert q_0)
    =-\beta\log Z+\beta\KL(q\Vert q^\star).
    \label{eq:gibbs_decomposition}
\end{equation}
The KL divergence is nonnegative by Gibbs' inequality. Therefore every feasible $q$ has objective at least $-\beta\log Z$, and equality is attained by $q=q^\star$ because $\KL(q^\star\Vert q^\star)=0$. If two distributions attain equality, both have zero KL divergence to $q^\star$, hence both equal $q^\star$ almost surely. This proves the stated value and uniqueness.

In the paper, $q_0$ is instantiated by the wastewater-only posterior surrogate $q_\theta(\nu_t\mid H_{t,\emptyset})$, and $\ell$ represents the downstream predictive energy induced by the acquired evidence. The identity shows that the KL term in \eqref{eq:surrogate_free_energy} is not an ad hoc penalty: it is exactly the regularizer whose optimizer is an exponentially tilted posterior relative to the wastewater-only base measure.
\end{proof}

\subsection{Proof of Theorem~\ref{thm:finite_lattice}: finite-lattice optimality}
\label{app:proof_finite_lattice}
\begin{proof}
Fix an episode $t$, a budget index $B$, terminal risks $\widetilde R_t(S)$, and nonnegative calibrated costs $\{\widetilde c_{B,m}\}_{m\in\M}$. Define the optimal value
\[
    V^\star_{B,t}(S)=\inf_\pi J^\pi_{B,t}(S),
\]
given the policy objective in \eqref{eq:app_policy_value}. The infimum is a minimum because the finite acyclic decision graph contains only finitely many deterministic policies; randomized policies are convex combinations of deterministic first-action values and cannot be strictly better than the best deterministic first action.

We prove by backward induction on the remaining number of modalities $d(S)=|\M|-|S|$ that Algorithm~\ref{alg:calibrated_oracle} returns $V^\star_{B,t}(S)$ at every state.

\emph{Base case.} If $d(S)=0$, then $S=\M$ and no query action is feasible. The only valid action is stop. Therefore
\[
    V^\star_{B,t}(\M)=\widetilde R_t(\M),
\]
which is exactly the boundary condition used by Algorithm~\ref{alg:calibrated_oracle}.

\emph{Induction step.} Suppose that for all states $S'$ with $d(S')<d(S)$, the algorithmic value equals $V^\star_{B,t}(S')$. From state $S$, every admissible policy has one of the following exhaustive first actions.

If it stops, the total cost is exactly $\widetilde R_t(S)$. If it queries a modality $m\in\M\setminus S$, it immediately pays $\widetilde c_{B,m}$ and reaches $S\cup\{m\}$, whose remaining optimal value is $V^\star_{B,t}(S\cup\{m\})$. By the induction hypothesis, this continuation value equals the algorithmic value $V_{B,t}(S\cup\{m\})$. Therefore the best value achievable by any policy after first querying $m$ is
\[
    \widetilde c_{B,m}+V_{B,t}(S\cup\{m\}).
\]
Taking the minimum over the stop action and all valid query actions gives
\begin{equation}
    V^\star_{B,t}(S)
    =\min\left\{\widetilde R_t(S),
        \min_{m\in\M\setminus S}\left[\widetilde c_{B,m}+V_{B,t}(S\cup\{m\})\right]
      \right\}.
    \label{eq:bellman_app}
\end{equation}
This is precisely the update performed by Algorithm~\ref{alg:calibrated_oracle}. The selected action $u^\star_{t,S,B}$ is any minimizer of the right side. Ties may be broken arbitrarily or by a fixed deterministic convention; every tied minimizer has the same optimal value.

Since the induction covers all $d(S)=0,1,\ldots,|\M|$, Algorithm~\ref{alg:calibrated_oracle} computes the global optimum for every state in the subset lattice. The proof also rules out a common failure mode of greedy arguments: the algorithm is not myopically selecting the largest immediate free-energy decrease; it is computing the exact finite-horizon dynamic program over all future acquisition sequences.
\end{proof}

\subsection{Representational proof of Proposition 1: full-evidence parity}
\label{app:full_parity_proof}
\begin{proof}
We state the argument in terms of empirical negative log-likelihood; the same containment argument applies to any strictly proper predictive loss used for phase and alert labels. Let
\[
    \mathcal L_{\mathrm{all}}(g)=\frac1n\sum_{i=1}^n
    -\log p_g(y_i,b_i\mid H_{i,\M})
\]
be the empirical full-evidence loss of a classical all-modality predictor $g$ from a comparison class $\mathcal G_{\mathrm{all}}$, and let $g^\star\in\arg\min_{g\in\mathcal G_{\mathrm{all}}}\mathcal L_{\mathrm{all}}(g)$ be an empirical optimum.

The proposition assumes a containment condition: for every $g\in\mathcal G_{\mathrm{all}}$ there exist BSLI parameters $(\theta_g,\psi_g)$ whose full-evidence predictive distribution approximates $g$ uniformly on the empirical sample,
\begin{equation}
    \frac1n\sum_{i=1}^n
    \left|-\log p_{\psi_g}(y_i,b_i\mid H_{i,\M})
    +\log p_g(y_i,b_i\mid H_{i,\M})\right|
    \le \epsilon_{\mathrm{approx}},
    \label{eq:parity_approx}
\end{equation}
and whose full-evidence movement from the wastewater-only posterior has bounded empirical KL,
\begin{equation}
    \frac1n\sum_{i=1}^n
    \KL\!\big(q_{\theta_g}(\nu_i\mid H_{i,\M})
      \Vert q_{\theta_g}(\nu_i\mid H_{i,\emptyset})\big)
    \le K_g <\infty .
    \label{eq:parity_kl}
\end{equation}
This is the formal version of saying that the masked encoder plus probabilistic decoder contains the classical all-modality predictor as a submodel, up to approximation error and the explicit KL regularizer. In practice, a residual MLP encoder can write the full-evidence features into the posterior mean, the decoder can implement the all-modality MLP map on that mean, and the posterior variance can be chosen to keep the Gaussian KL finite.

For the BSLI full-evidence objective
\[
    \mathcal L^{\M}_{\mathrm{BSLI}}(\theta,\psi)=\frac1n\sum_{i=1}^n
    \left[-\log p_\psi(y_i,b_i\mid H_{i,\M})
    +\beta\KL\!\big(q_\theta(\nu_i\mid H_{i,\M})
      \Vert q_\theta(\nu_i\mid H_{i,\emptyset})\big)\right],
\]
choose $(\theta_{g^\star},\psi_{g^\star})$ from the containment assumption. Equations~\eqref{eq:parity_approx} and~\eqref{eq:parity_kl} imply
\[
    \mathcal L^{\M}_{\mathrm{BSLI}}(\theta_{g^\star},\psi_{g^\star})
    \le \mathcal L_{\mathrm{all}}(g^\star)
       +\epsilon_{\mathrm{approx}}+\beta K_{g^\star}.
\]
Taking the minimum over all BSLI parameters can only reduce the left side, hence
\begin{equation}
    \min_{\theta,\psi}\mathcal L^{\M}_{\mathrm{BSLI}}(\theta,\psi)
    \le
    \min_{g\in\mathcal G_{\mathrm{all}}}\mathcal L_{\mathrm{all}}(g)
    +\epsilon_{\mathrm{approx}}+\beta K_{g^\star}.
    \label{eq:parity_bound}
\end{equation}
This proves the claimed parity up to approximation and regularization.

The statement in the main paper is representational rather than an optimization guarantee for a finite neural training run. Stage I trains on multiple subsets, so an optimizer may trade full-evidence performance against partial-evidence performance if the architecture, optimizer, or subset sampling distribution is insufficient. This is why the experimental protocol explicitly includes a full-evidence parity diagnostic: report the gap between BSLI evaluated at $S=\M$ and the clean all-modality MLP in NLL, phase macro-F1, alert Brier, and ECE. If that gap is large, it is an implementation or optimization failure, not a failure of the acquisition theorem.
\end{proof}

\subsection{Answerability threshold optimality}
\label{app:proof_answerability}
\begin{lemma}[Constrained plug-in answerability rule]
\label{lem:answerability}
Consider a terminal state $H_{t,S}$ with two semantic actions: answer or abstain. Let $r(H_{t,S})$ be the conditional expected loss of answering, and let $\tau_{\mathrm{abs}}$ be the loss of abstention. Let $G(t,S)\in\{0,1\}$ be the scientific admissibility gate. If $G(t,S)=0$, answering is infeasible or incurs a prohibitive loss. If $G(t,S)=1$, the Bayes action is answer if and only if $r(H_{t,S})\le\tau_{\mathrm{abs}}$. If $F(t,S)$ is a monotone calibrated surrogate for $r(H_{t,S})$, then thresholding $F(t,S)$ and multiplying by $G(t,S)$ is the corresponding plug-in constrained Bayes rule.
\end{lemma}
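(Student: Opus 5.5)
The argument is a pointwise Bayes-decision comparison at the fixed terminal state $H_{t,S}$, treated as two cases according to the gate. It ends with a monotonicity argument that transfers the threshold from the true risk $r$ to the surrogate $F$.

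\emph{Setup.} Fix $(t,S)$. I would model the constraint by assigning the answer action the loss
\[
L_{\mathrm{ans}}=r(H_{t,S})\ \text{if }G(t,S)=1,\qquad L_{\mathrm{ans}}=+\infty\ \text{if }G(t,S)=0,
\]
and the abstain action the loss $L_{\mathrm{abs}}=\tau_{\mathrm{abs}}$. This expresses both ``infeasible'' and ``prohibitive loss'' in one way. The constrained Bayes action is then $\arg\min\{L_{\mathrm{ans}},L_{\mathrm{abs}}\}$. As in the reduction in Appendix~\ref{app:policy_notation}, randomizing between the two actions gives a convex combination of these two numbers, so it cannot beat the smaller one. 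It therefore suffices to compare deterministic actions.

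\emph{Case analysis.} If $G(t,S)=0$, then $L_{\mathrm{ans}}=+\infty>\tau_{\mathrm{abs}}$ because $\tau_{\mathrm{abs}}$ is finite, so the Bayes action is to abstain. If $G(t,S)=1$, answering is optimal exactly when $r(H_{t,S})\le\tau_{\mathrm{abs}}$, with ties broken toward answering to match the non-strict inequality in Eq.~\eqref{eq:answerability_certificate}. Combining the two cases, the Bayes indicator is $\ind\{r(H_{t,S})\le\tau_{\mathrm{abs}}\}\,G(t,S)$.

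\emph{Plug-in step.} This is the step that needs care, because ``monotone calibrated surrogate'' has to be made precise. I would take it to mean that there is a nondecreasing map $h$ with $r(H_{t,S})=h(F(t,S))$, or at least $r\le\tau_{\mathrm{abs}}\iff F\le h^{-1}(\tau_{\mathrm{abs}})$. Under that reading, define $\tau_{\mathrm{id}}=\sup\{f: h(f)\le\tau_{\mathrm{abs}}\}$. Monotonicity gives $\{r\le\tau_{\mathrm{abs}}\}=\{F\le\tau_{\mathrm{id}}\}$, up to the endpoint convention at plateaus or discontinuities of $h$; at such points I would use right-continuity or adopt a tie convention.

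Substituting this into the Bayes indicator gives exactly $\iota^\star_{t,S}$ of Eq.~\eqref{eq:answerability_certificate}. Because $F$ only estimates $r$, I would state the conclusion as the plug-in version of the constrained Bayes rule: it coincides with the true rule whenever the calibration is exact. Finally, I would note that under this rule the terminal risk of Eq.~\eqref{eq:terminal_risk} equals the minimum of the two action losses computed with $F$ in place of $r$.
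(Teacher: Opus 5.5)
Your proposal is correct and follows essentially the same route as the paper: a pointwise Bayes comparison split on $G(t,S)$, with $G=0$ handled as infeasibility (equivalently a prohibitive loss), and a nondecreasing calibration map carrying $r\le\tau_{\mathrm{abs}}$ over to $F\le\tau_{\mathrm{id}}$; your explicit generalized-inverse choice of $\tau_{\mathrm{id}}$ is a sharper version of the paper's ``calibrate $\tau_{\mathrm{id}}$ on the development split.'' Two small slips: the closed endpoint in $\{F\le\tau_{\mathrm{id}}\}=\{h(F)\le\tau_{\mathrm{abs}}\}$ needs $h$ to be left-continuous (not right-continuous) at $\tau_{\mathrm{id}}$, and your closing remark that \eqref{eq:terminal_risk} equals the minimum of the two action losses with $F$ in place of $r$ holds only when $\tau_{\mathrm{id}}=\tau_{\mathrm{abs}}$, so you should drop or qualify it (the lemma does not need it).
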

\begin{proof}
First suppose $G(t,S)=1$. The conditional risk of answering is $r(H_{t,S})$ by definition, while the conditional risk of abstaining is the constant $\tau_{\mathrm{abs}}$. The Bayes action minimizes conditional risk pointwise, so answer is optimal exactly when
\[
    r(H_{t,S})\le\tau_{\mathrm{abs}}.
\]
If $r(H_{t,S})>\tau_{\mathrm{abs}}$, abstention has smaller risk. If equality holds, both actions are Bayes optimal; the paper uses the conservative convention that may still be thresholded with ``$\le$'' after validation.

Now suppose $G(t,S)=0$. This means at least one scientific admissibility constraint fails, such as insufficient coverage, stale evidence, unresolved source ambiguity, or a safety rule forbidding a human-burden claim. This can be formalized either by removing answer from the feasible action set or by assigning answering an additional penalty $M$ and taking $M>\tau_{\mathrm{abs}}+\sup_H r(H)$ on the finite benchmark. In both formalizations, abstention is the unique feasible or risk-minimizing action. Therefore the constrained answerability indicator must be zero whenever $G(t,S)=0$.

Finally, assume that the frozen free-energy score $F(t,S)$ is a monotone surrogate for the conditional answering risk. More explicitly, suppose there exists a nondecreasing calibration map $\varphi$ such that $r(H_{t,S})\approx\varphi(F(t,S))$ on the development distribution. Then the Bayes threshold $r\le\tau_{\mathrm{abs}}$ corresponds to an equivalent threshold $F\le\tau_{\mathrm{id}}$ after calibrating $\tau_{\mathrm{id}}$ on the development split. Combining this threshold with the hard feasibility gate gives
\[
    \iota^\star_{t,S}=\ind\{F(t,S)\le\tau_{\mathrm{id}}\}G(t,S),
\]
which is \eqref{eq:answerability_certificate}. The rule is ``plug-in'' because $F$ substitutes for the unobserved conditional risk; the scientific gate is not a learned confidence score but constraints imposed by the scientific task definition.
\end{proof}

\subsection{Scale-mismatch proposition}
\label{app:proof_scale_mismatch}
\begin{proposition}[Raw Bellman policies are not scale invariant]
\label{prop:scale_mismatch_app}
Consider the raw recursion
\[
    V_t(S)=\min\left\{R_t(S),\min_{m\in\M\setminus S}\left[c_m+V_t(S\cup\{m\})\right]\right\}.
\]
If every terminal risk is multiplied by a positive scalar $a$ while costs are unchanged, the optimal query/stop decision can change even though the ordering of predictive risks across evidence subsets is unchanged.
\end{proposition}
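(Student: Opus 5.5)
The claim is existential ("can change"), so I will prove it with an explicit counterexample on the smallest nontrivial lattice. Take a single optional modality, $\M=\{m\}$, so there are only two states, $\emptyset$ and $\{m\}$. Fix a cost $c_m=c>0$ and terminal risks $R_t(\emptyset)=r_0>r_1=R_t(\{m\})\ge0$. Querying then strictly lowers predictive risk, and this ordering is preserved under any positive rescaling $R\mapsto aR$.

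By Theorem~\ref{thm:finite_lattice}, applied with raw rather than calibrated units (its proof uses only nonnegativity of the costs), the raw recursion gives the optimal policy. At $\emptyset$ the policy compares stop value $aR_t(\emptyset)=a r_0$ against query value $c+aR_t(\{m\})=c+a r_1$. Querying is strictly optimal iff
\[
a(r_0-r_1)>c,
\]
and stopping is strictly optimal iff $a(r_0-r_1)<c$.

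Define the threshold $a^\dagger=c/(r_0-r_1)$, which is positive and finite. Choosing $a_1<a^\dagger<a_2$ produces a stop decision at scale $a_1$ and a query decision at scale $a_2$. The ordering $a r_0>a r_1$ of subset risks is identical in both cases. For concreteness, take $r_0=2$, $r_1=1$, $c=1$: at $a=1/2$ the policy stops ($1<1.5$), and at $a=2$ it queries ($4>3$).

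I would then remark that the example embeds into the four-modality benchmark. Set $R_t(S)=r_0$ for $S\not\ni m$ and $R_t(S)=r_1$ for $S\ni m$, and give every other modality cost $c$. Any optimal trajectory then either stops at $\emptyset$ or queries only $m$, so the same threshold phenomenon appears at $\emptyset$.

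There is no real obstacle here. The only point needing care is the word "ordering": I will state explicitly that monotone invariance refers to the ranking of $R_t(\cdot)$ across subsets, which positive scaling preserves, whereas the decision depends on risk differences relative to the fixed additive costs. Ties at $a=a^\dagger$ are avoided by choosing strict inequalities. The same computation also motivates the multiplier $\lambda_B$ in Eq.~\eqref{eq:calibrated_units}: scaling risks by $a$ is equivalent to scaling costs by $1/a$, so one multiplier absorbs the scale.
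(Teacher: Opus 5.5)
Your proposal is correct and takes essentially the same route as the paper: a single remaining modality with $R_0>R_1$, where querying is optimal iff $a(R_0-R_1)>c_m$, so scales on either side of $c_m/(R_0-R_1)$ give different decisions while the risk ordering is preserved. Your embedding into the four-modality lattice is also valid: every trajectory costs $c|T|$ plus $ar_0$ or $ar_1$, which is minimized at $T=\emptyset$ or $T=\{m\}$. The paper reaches the same setup more directly by taking any state $S$ with $\M\setminus S=\{m\}$.
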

\begin{proof}
A one-query instance is sufficient. Let $\M\setminus S=\{m\}$, and write $R_0=R_t(S)$ and $R_1=R_t(S\cup\{m\})$. Assume $R_0>R_1$ so that querying reduces terminal risk by $\Delta=R_0-R_1>0$. The raw Bellman rule queries if and only if
\[
    c_m+R_1<R_0,
\]
or equivalently $\Delta>c_m$.

Now multiply both terminal risks by $a>0$ while keeping $c_m$ fixed. This transformation changes only the numerical unit of the terminal risk; it does not change which evidence subset has lower predictive risk, since $aR_1<aR_0$ remains true. The query condition becomes
\[
    c_m+aR_1<aR_0
    \quad\Longleftrightarrow\quad
    a\Delta>c_m.
\]
For any fixed $\Delta>0$ and $c_m>0$, choosing $a<c_m/\Delta$ makes stopping optimal, while choosing $a>c_m/\Delta$ makes querying optimal. Therefore the raw policy is not invariant to a change of risk units. Directly adding operational costs and variational free energies is meaningful only after specifying a conversion between the two scales.
\end{proof}

\subsection{Why learning one cost-energy multiplier is enough for scale calibration}
\label{app:proof_lambda_calibration}
\begin{proposition}[Policy-family invariance under linear terminal-risk rescaling]
\label{prop:lambda_absorbs_scale}
Fix dimensionless terminal risks $\widetilde R_t(S)$ and raw normalized costs $\bar c_m=c_m/(s_c+\epsilon)$. Let $\Pi(\lambda)$ be the hard Bellman policy obtained from
\[
    V_{\lambda,t}(S)=\min\left\{\widetilde R_t(S),
    \min_{m\notin S}\left[\lambda\bar c_m+V_{\lambda,t}(S\cup\{m\})\right]\right\}.
\]
If terminal risks are linearly rescaled to $\widetilde R'_t(S)=a\widetilde R_t(S)$ for any $a>0$, then the policy obtained with multiplier $\lambda'=a\lambda$ is identical to $\Pi(\lambda)$ up to tie-breaking.
\end{proposition}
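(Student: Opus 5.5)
The plan is to show that the rescaled value function is exactly $a$ times the original one at every state of the lattice. Once that holds, every $Q$-value in the rescaled problem is a positive multiple of the corresponding original $Q$-value, so the argmin sets agree. Concretely, let $V'_{\lambda',t}$ be the recursion with terminal risks $a\widetilde R_t(S)$ and multiplier $\lambda'=a\lambda$. The claim to prove is
\[
    V'_{\lambda',t}(S)=a\,V_{\lambda,t}(S)\qquad\text{for all } S\subseteq\M .
\]
I would prove it by backward induction on $d(S)=|\M|-|S|$, using the same acyclic structure as in the proof of Theorem~\ref{thm:finite_lattice}.

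\emph{Base case.} For $S=\M$ only stopping is feasible, so $V'_{\lambda',t}(\M)=a\widetilde R_t(\M)=aV_{\lambda,t}(\M)$.

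\emph{Induction step.} Assume the identity holds for all supersets of $S$. The rescaled action values are then
\[
    Q'(\mathrm{stop},S)=a\widetilde R_t(S)=a\,Q(\mathrm{stop},S),
\]
\[
    Q'(m,S)=a\lambda\bar c_m+aV_{\lambda,t}(S\cup\{m\})=a\,Q(m,S).
\]
Since $a>0$, multiplication by $a$ is strictly increasing and commutes with $\min$. This gives the value identity at $S$. It also gives $\arg\min_{x\in\mathcal A(S)}Q'(x,S)=\arg\min_{x\in\mathcal A(S)}Q(x,S)$. Hence the two hard policies select the same action set at every state, and with a common deterministic tie-breaking convention they coincide.

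Two short remarks complete the argument. First, because the policies agree at every state, they induce identical trajectories, terminal subsets $T_{\lambda,t}$, and raw costs $C_{\lambda,t}$. Second, the development objective in Eq.~\eqref{eq:lambda_calibration} therefore becomes a reparametrization $\lambda\mapsto a\lambda$ of the policy family. The rescaled risk term is $a$ times the original, so it is positively rescaled, while the budget penalty is unchanged.

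The first remark is what justifies the phrase ``one multiplier is enough'': the family $\{\Pi(\lambda)\}_{\lambda>0}$ is invariant as a set under linear rescaling of risks, so searching over the single scalar $\lambda$ absorbs any linear unit change. I would not claim that the selected $\lambda_B$ rescales exactly to $a\lambda_B$. With $\rho_B>0$, the relative weight between the risk term and the budget penalty changes by the factor $a$, so only the policy family is preserved, not necessarily the exact minimizer of Eq.~\eqref{eq:lambda_calibration}. That caveat is the one place needing care.

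The main obstacle is bookkeeping rather than depth. Ties must be handled consistently, hence ``up to tie-breaking''. The argument also uses positivity of $a$, which ensures that the order of the $Q$-values is preserved; a negative $a$ would reverse it.
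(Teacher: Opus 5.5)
Your proof is correct and follows the paper's argument essentially verbatim. Both use backward induction on the lattice to get $V'_{\lambda',t}(S)=aV_{\lambda,t}(S)$, and then positivity of $a$ to preserve the argmin at every state, with a common tie-breaking rule handling ties. Your caveat goes beyond the paper and is accurate: since the budget penalty in Eq.~\eqref{eq:lambda_calibration} does not rescale with $a$, the development-selected $\lambda_B$ need not transform exactly to $a\lambda_B$.
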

\begin{proof}
Let $V'_{\lambda',t}$ denote the value function under terminal risks $\widetilde R'_t=a\widetilde R_t$ and multiplier $\lambda'=a\lambda$. We prove by backward induction that
\[
    V'_{\lambda',t}(S)=aV_{\lambda,t}(S)
\]
for every subset $S$. At $S=\M$,
\[
    V'_{\lambda',t}(\M)=a\widetilde R_t(\M)=aV_{\lambda,t}(\M).
\]
Assume the identity holds for all strict successors of $S$. Then
\begin{align*}
    V'_{\lambda',t}(S)
    &=\min\left\{a\widetilde R_t(S),
      \min_{m\notin S}\left[a\lambda\bar c_m+V'_{\lambda',t}(S\cup\{m\})\right]\right\} \\
    &=\min\left\{a\widetilde R_t(S),
      \min_{m\notin S}\left[a\lambda\bar c_m+aV_{\lambda,t}(S\cup\{m\})\right]\right\} \\
    &=a\min\left\{\widetilde R_t(S),
      \min_{m\notin S}\left[\lambda\bar c_m+V_{\lambda,t}(S\cup\{m\})\right]\right\} \\
    &=aV_{\lambda,t}(S).
\end{align*}
Multiplication by the positive scalar $a$ preserves all strict action inequalities. Thus the argmin action is unchanged whenever there is a unique minimizer; when there are ties, the same deterministic tie-breaking rule gives the same policy. Consequently, searching over $\lambda$ removes arbitrary linear unit choices in the terminal-risk scale. This is the formal reason \BSLI learns a cost-energy conversion rather than fixing one by hand.
\end{proof}

\begin{lemma}[Well-posed one-dimensional calibration]
\label{lem:lambda_wellposed}
On a finite development set, with a finite modality set and a compact search interval $[\lambda_{\min},\lambda_{\max}]\subset(0,\infty)$, the development objective used to select $\lambda_B$ attains a minimizer. Moreover, only finitely many distinct hard Bellman policies can occur as $\lambda$ varies.
\end{lemma}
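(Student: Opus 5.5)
The plan is to prove the second claim (finitely many policies) first and then deduce the existence of a minimizer from it. Fix the finite development set $\D_{\mathrm{dev}}$ and the budget $B$. For each episode $t$, the hard Bellman policy $\Pi(\lambda)$ on the lattice $2^{\M}$ is determined by which action attains the minimum at each state $S$. Each $Q$-value is an affine function of $\lambda$ with nonnegative slope. Unrolling the recursion in Eq.~\eqref{eq:calibrated_bellman}, as justified by Theorem~\ref{thm:finite_lattice}, every value $V_{\lambda,t}(S)$ is the minimum over the finitely many acquisition paths from $S$. A path that queries the set $Q$ and stops at $T=S\cup Q$ has cost
\[
\lambda\sum_{m\in Q}\bar c_m+\widetilde R_t(T),
\]
where $\bar c_m=c_m/(s_c+\epsilon)$. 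Since the sum of $\bar c_m$ depends only on the set $Q$ and not on the order of queries, these are finitely many affine functions of $\lambda$. So $V_{\lambda,t}(S)$ is a concave, piecewise-affine function of $\lambda$ with finitely many breakpoints.

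I then collect all pairwise crossing points of the candidate $Q$-values. These come from pairs of affine functions $\lambda\mapsto \lambda\,C_1+R_1$ and $\lambda\mapsto\lambda\,C_2+R_2$ with $C_1\neq C_2$, whose crossing is at $\lambda=(R_2-R_1)/(C_1-C_2)$. Taken over all episodes, states, and pairs, this gives a finite set $\Lambda^\ast\subset[\lambda_{\min},\lambda_{\max}]$. Pairs with $C_1=C_2$ never cross, and a fixed deterministic tie-breaking rule settles them. On each open interval between consecutive points of $\Lambda^\ast$, the ordering of all $Q$-values at every state is constant. Hence the argmin action at every state is constant, so the policy $\Pi(\lambda)$, the terminal subsets $T_{\lambda,t}$, and the costs $C_{\lambda,t}$ are all constant there. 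Together with the finitely many points of $\Lambda^\ast$ themselves, this gives at most $2|\Lambda^\ast|+1$ distinct policies. A cruder but sufficient bound is that there are at most finitely many deterministic policies on a finite acyclic graph for finitely many episodes.

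For attainment, the objective in Eq.~\eqref{eq:lambda_calibration} depends on $\lambda$ only through $\{T_{\lambda,t}\}_{t\in\D_{\mathrm{dev}}}$. Both $\widetilde R_t(T_{\lambda,t})$ and $C_{\lambda,t}$ are functions of the terminal subsets, and the penalty term is a fixed function of the mean cost. The objective is therefore a step function of $\lambda$ that takes finitely many values. A real function with finitely many values attains its minimum over any nonempty domain: pick a minimal value and any $\lambda$ in its preimage, which lies in $[\lambda_{\min},\lambda_{\max}]$. Note that compactness is not really needed for attainment. It only matters if one reads the objective as having a lower semicontinuous envelope, and I would remark that restricting to the compact interval (or to the finite grid of the appendix) avoids any issue with intervals extending to $0$ or $\infty$.

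The main obstacle is ties and tie-breaking at breakpoints. At a crossing point the argmin may be non-unique, and the resulting policy depends on the convention. I would fix a deterministic lexicographic tie-break so that $\Pi(\lambda)$ is a well-defined function; the finiteness argument then still applies, since the breakpoints add only finitely many extra policies. I would also check that with an arbitrary but fixed rule, the policy at a breakpoint agrees with one of finitely many options. This holds because the action set at each state is finite, so no continuity argument is needed.
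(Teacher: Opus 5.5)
Your proposal is correct and follows essentially the same route as the paper's proof: you unroll the recursion into finitely many affine trajectory costs $\widetilde R_t(T)+\lambda\sum_{m\in Q}\bar c_m$, observe that the hard policy (hence every $T_{\lambda,t}$ and $C_{\lambda,t}$) is constant between the finitely many pairwise crossings, and conclude that the objective takes finitely many values and so attains its minimum. One wording fix: the $Q$-values are concave piecewise-affine (minima over path costs), not affine, so the constancy of their ordering on each open interval should be derived from the constancy of the ordering of the affine path costs, which your breakpoint set does guarantee.
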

\begin{proof}
For each development episode and each subset state, the Bellman action compares finitely many affine functions of $\lambda$ after recursively substituting successor values. Equivalently, each complete acquisition trajectory has total calibrated objective
\[
    \widetilde R_t(T)+\lambda\sum_{m\in Q}\bar c_m,
\]
where $Q$ is the queried set and $T$ is the terminal subset. There are finitely many feasible trajectories because $\M$ is finite. The hard Bellman policy therefore selects the lower envelope of finitely many affine functions for each episode-state pair. The selected trajectory can change only at pairwise intersections of these affine functions or at ties. Across a finite development set and finite lattice, the union of such breakpoints is finite.

Between two adjacent breakpoints, every episode follows the same hard policy, so the empirical terminal risk and empirical raw cost are constant. The development objective
\[
    \frac1{|\D_{\mathrm{dev}}|}\sum_{t\in\D_{\mathrm{dev}}}\widetilde R_t(T_{\lambda,t})
    +\rho_B\left(\max\left\{0,
      \frac1{|\D_{\mathrm{dev}}|}\sum_{t\in\D_{\mathrm{dev}}} C_{\lambda,t}-B\right\}\right)^2
\]
is therefore piecewise constant as a function of the hard policy, with possible jumps only at finitely many breakpoints. On the compact interval, a minimum exists by checking one representative point in every open segment and the finitely many breakpoints with the fixed tie-breaking rule. This justifies exact line search, breakpoint enumeration, or a sufficiently fine predeclared grid.
\end{proof}

\subsection{Soft-Bellman relaxation}
\label{app:proof_softbellman}
\begin{lemma}[Soft-min approximation bound]
\label{lem:softmin_bound}
For any finite set of action costs $\{Q_a:a\in\mathcal A\}$ and temperature $\tau_Q>0$, define
\[
    \operatorname{softmin}_{\tau_Q}(Q)
    =-\tau_Q\log\sum_{a\in\mathcal A}\exp(-Q_a/\tau_Q).
\]
Then
\[
    \min_{a\in\mathcal A}Q_a-\tau_Q\log|\mathcal A|
    \le \operatorname{softmin}_{\tau_Q}(Q)
    \le \min_{a\in\mathcal A}Q_a.
\]
Consequently, the soft-Bellman value converges uniformly to the hard Bellman value as $\tau_Q\to0$ on the finite evidence lattice.
\end{lemma}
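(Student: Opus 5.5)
The plan is to prove the two-sided bound by a direct comparison of the log-sum-exp with its largest term, and then propagate it through the Bellman recursion by backward induction on $d(S)=|\M|-|S|$, mirroring the proof of Theorem~\ref{thm:finite_lattice}.

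For the pointwise bound, let $Q_{\min}=\min_a Q_a$, attained at some $a^\star$. Every summand satisfies $0<\exp(-Q_a/\tau_Q)\le \exp(-Q_{\min}/\tau_Q)$, and the term at $a^\star$ equals this maximum. It follows that
\[
\exp(-Q_{\min}/\tau_Q)\le \sum_a \exp(-Q_a/\tau_Q)\le |\mathcal A|\exp(-Q_{\min}/\tau_Q).
\]
Taking logarithms and multiplying by $-\tau_Q<0$ reverses the inequalities. This gives $Q_{\min}-\tau_Q\log|\mathcal A|\le \operatorname{softmin}_{\tau_Q}(Q)\le Q_{\min}$. The function $\operatorname{softmin}_{\tau_Q}$ is also monotone in each argument. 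Moreover, shifting every argument by the same constant shifts the output by that constant, so $\operatorname{softmin}$ is $1$-Lipschitz in the sup norm, and the hard $\min$ has the same property. This fact is what the induction needs.

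For the convergence claim, set $A=|\M|+1\ge\max_S|\mathcal A(S)|$. I would prove by backward induction on $d(S)$ that
\[
V_{B,t}(S)-d'(S)\,\tau_Q\log A\le V^\tau_{B,t}(S)\le V_{B,t}(S),
\]
where $d'(S)=d(S)+1$. A count of $d(S)$ also seems achievable, but the slack is harmless.

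At $S=\M$, the only action is stop, so both values equal $\widetilde R_t(\M)$. For the inductive step, the soft $Q$-values are $Q^\tau(m,S)=\widetilde c_{B,m}+V^\tau(S\cup\{m\})$, and by the hypothesis each lies within $d(S)\tau_Q\log A$ below the corresponding hard $Q$-value. The stop $Q$-value is exact. Monotonicity and the shift property of softmin then place $\operatorname{softmin}(Q^\tau)$ between $\min Q^{\mathrm{hard}}-d(S)\tau_Q\log A-\tau_Q\log A$ and $\min Q^{\mathrm{hard}}$. This closes the induction.

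Hence $\sup_S|V^\tau_{B,t}(S)-V_{B,t}(S)|\le(|\M|+1)\,\tau_Q\log(|\M|+1)$. The bound is uniform over states, and also over episodes and budgets because it does not depend on $\widetilde R$ or $\widetilde c$. It tends to $0$ as $\tau_Q\to0$.

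The main obstacle is the bookkeeping in the inductive step: the errors accumulate additively along the depth of the lattice rather than staying at a single $\tau_Q\log|\mathcal A|$. One has to check that softmin's shift-equivariance and monotonicity let the successor error pass through without amplification. Once the $1$-Lipschitz property is stated explicitly, this is immediate.
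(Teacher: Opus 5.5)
Your proof is correct and follows the same route as the paper: first the sandwich $e^{-Q_{\min}/\tau_Q}\le\sum_a e^{-Q_a/\tau_Q}\le|\mathcal A|\,e^{-Q_{\min}/\tau_Q}$, then backward induction over the lattice. Your use of monotonicity and shift-equivariance of the softmin passes successor errors through without amplification and yields the explicit rate $(|\M|+1)\tau_Q\log(|\M|+1)$. This simply fills in the step that the paper compresses into ``the one-step error is uniformly bounded, so backward induction gives uniform convergence.''
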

\begin{proof}
Let $Q_{\min}=\min_a Q_a$. Since $\exp(-Q_a/\tau_Q)\le\exp(-Q_{\min}/\tau_Q)$ for every $a$ and at least one action attains $Q_{\min}$,
\[
    \exp(-Q_{\min}/\tau_Q)
    \le \sum_{a\in\mathcal A}\exp(-Q_a/\tau_Q)
    \le |\mathcal A|\exp(-Q_{\min}/\tau_Q).
\]
Taking $-\tau_Q\log(\cdot)$ reverses the inequalities and gives the stated bound. Since the evidence lattice is finite and each state has at most $|\M|+1$ actions, the one-step approximation error is uniformly bounded. Backward induction over at most $|\M|$ query steps gives uniform convergence of the soft-Bellman recursion to the hard Bellman recursion. Thus soft-Q targets are a stable distillation device near ties; they do not change the exact oracle when the temperature is sent to zero.
\end{proof}

\section{Evaluation protocol}
\label{app:metrics}
For modality $m\in\M$, let $c_m$ be the fixed operational cost. For episode $i$ and system $a$, realized evidence cost is
\[
    C_i(a)=\sum_{m\in S_i(a)}c_m,\qquad \overline C(a)=\frac{1}{n}\sum_{i=1}^n C_i(a).
\]
Budget intervals are fixed on the development split. The intervals are $B0$ for wastewater-only systems, $B1$ for low-cost evidence, $B2$ for medium-cost evidence, and $B3$ for the full evidence budget. If $\overline C(a)$ exceeds a budget interval, the corresponding table entry is a dash.

For each eligible system-budget pair $(a,B_k)$, compute conventional predictive metrics on the same test episodes:
\[
    \mathrm{F1}^{\mathrm{phase}}_{a,k}=\mathrm{MacroF1}(\{\widehat y_i^a\}_{i=1}^n,\{y_i\}_{i=1}^n),
\]
\[
    \mathrm{AUC}^{\mathrm{alert}}_{a,k}=\mathrm{AUROC}(\{\widehat p_i^a(b=1)\}_{i=1}^n,\{b_i\}_{i=1}^n),\qquad
    \mathrm{Brier}^{\mathrm{alert}}_{a,k}=\frac1n\sum_{i=1}^n(\widehat p_i^a(b=1)-b_i)^2.
\]
ECE is computed separately for phase and alert. Main tables use $\mathrm{ECE}=(\mathrm{ECE}_{\mathrm{phase}}+\mathrm{ECE}_{\mathrm{alert}})/2$. Within a fixed budget interval,
\[
    \EDU_{a,k}=\mathrm{F1}^{\mathrm{phase}}_{a,k}+\mathrm{AUC}^{\mathrm{alert}}_{a,k}-\mathrm{Brier}^{\mathrm{alert}}_{a,k}-\mathrm{ECE}_{a,k}.
\]
The area under the budget frontier is
\[
    \AUBF(a)=\frac{1}{|K_a|}\sum_{k\in K_a}\EDU_{a,k},
\]
where $K_a$ is the set of budget intervals in which system $a$ is eligible or tunable.

At coverage $\alpha$, answer the top-$\alpha$ fraction of episodes by answerability score. Let $A_\alpha(a)$ be the answered set and define
\[
    r_i^a=\ind\{\widehat y_i^a\ne y_i\}\vee\ind\{\widehat b_i^a\ne b_i\}\vee\ind\{\mathrm{unsafe\ nonabstention}\}.
\]
Then
\[
    \GRisk@\alpha(a)=\frac{1}{|A_\alpha(a)|}\sum_{i\in A_\alpha(a)}r_i^a.
\]
An unsafe nonabstention is counted when a source-ambiguity episode fails the admissibility or safety condition used by $G_{\mathrm{sci}}$ but the system nevertheless issues a human-burden answer. For systems that produce tool sequences, compare actions with Algorithm~\ref{alg:calibrated_oracle}. Report top-1 action accuracy, $\epsilon$-optimal action accuracy, terminal evidence-set Jaccard, and cost regret. Source-ambiguity safety episodes report abstain recall and overclaim rate.

\section{Additional experiment cards and full result tables}
\label{app:experiment_cards}
The main text reports a compact set of representative rows; this appendix keeps the full benchmark matrix. The tables should be read by budget bin rather than by a single global rank: a method is compared only to systems whose realized mean cost places it in the same bin or to tunable systems evaluated at the same target budget.

\paragraph{H5 source-ambiguity card.}
We additionally evaluate \BSLI on the H5 source-ambiguity subset. This subset contains 1,375 episodes, with 705 episodes marked as should-abstain under the source-ambiguity protocol, corresponding to a should-abstain rate of 51.27\%. At 80\% coverage, \BSLI obtains a selective risk of 39.09\%, giving a direct stress test of the answer/query/abstain interface on source-ambiguous H5 episodes.

\begin{table}[H]
\centering
\caption{BSLI evaluation on H5 source-ambiguity episodes.}
\label{tab:h5_source_ambiguity}
\footnotesize
\setlength{\tabcolsep}{4pt}
\begin{tabular}{lrrrr}
\toprule
Subset & Episodes & Should-abstain $n$ & Abstain rate & Risk at 80\% coverage \\
\midrule
H5 source-ambiguity & 1,375 & 705 &  51.27\% & 39.09\% \\
\bottomrule
\end{tabular}
\end{table}

\paragraph{Static predictors and workflows.}
Wastewater-only models test how far the mandatory signal alone can go. Single-tool MLPs test whether one fixed clinical or source stream is enough. The clean all-modality MLP is the full-evidence reference and is not cost matched to low- or medium-budget systems. Fixed workflows such as cheapest-first and ED-first test hand-designed acquisition rules with the same terminal prediction protocol as the learned acquisition baselines.

\paragraph{Adaptive acquisition baselines.}
EDDI is implemented with modality blocks rather than individual scalar features, using the same raw costs and target labels. GSM/RL AFA receives intermediate rewards only from development-calibrated predictive utility, not from hidden answerability labels. GDFS estimates conditional mutual information for the phase and alert targets under the observed subset. ACO searches the finite subset space directly and is therefore the strongest nongreedy adaptive-acquisition baseline in the study.

\paragraph{Selective and tool-router baselines.}
SelectiveNet tests confidence-based refusal with a generic selection head. The ReAct-style router receives the wastewater summary, tool descriptions, and costs, and may call ED/HOSP/POS/SRC tools before producing phase/alert/abstain. BSLI-specific posterior moments, free-energy values, scientific-gate values, and oracle labels remain reserved for the proposed method.

The supervised tool router uses the same action space and train/dev split as BSLI but replaces posterior features with deterministic masked features. These baselines test whether generic tool routing can substitute for calibrated posterior-state control.

\begin{table*}[t]
\centering
\caption{Complete budget-binned result table. Each cell reports F1 / AUROC / Brier / ECE / $\EDU$; dashes indicate ineligible budget bins.}
\label{tab:full_budget_table}
\scriptsize
\setlength{\tabcolsep}{2pt}
\renewcommand{\arraystretch}{1.08}
\begin{tabularx}{\textwidth}{@{}p{0.21\textwidth}p{0.18\textwidth}p{0.18\textwidth}p{0.18\textwidth}X@{}}
\toprule
System & B0: no query & B1: low cost & B2: medium cost & B3: full budget \\
\midrule
Serfling seasonal threshold & 0.650 / 0.818 / 0.137 / 0.221 / 1.109 & -- & -- & -- \\
WW logistic & 0.663 / 0.924 / 0.101 / 0.046 / 1.440 & -- & -- & -- \\
WW XGBoost & 0.725 / 0.933 / 0.094 / 0.027 / 1.538 & -- & -- & -- \\
WW lag MLP & 0.660 / 0.889 / 0.104 / 0.043 / 1.402 & -- & -- & -- \\
WW+ED MLP & -- & 0.967 / 0.975 / 0.059 / 0.062 / 1.820 & -- & -- \\
WW+SRC MLP & -- & 0.674 / 0.907 / 0.102 / 0.025 / 1.453 & -- & -- \\
WW+POS MLP & -- & 0.713 / 0.937 / 0.090 / 0.028 / 1.531 & -- & -- \\
WW+HOSP MLP & -- & -- & 0.710 / 0.948 / 0.084 / 0.047 / 1.527 & -- \\
Clean all-modality MLP & -- & -- & -- & 0.936 / 0.972 / 0.061 / 0.039 / 1.808 \\
Fixed cheapest-first & -- & 0.560 / 0.920 / 0.122 / 0.094 / 1.264 & 0.877 / 0.972 / 0.072 / 0.134 / 1.643 & 0.768 / 0.959 / 0.074 / 0.049 / 1.604 \\
Fixed ED-first & -- & 0.823 / 0.973 / 0.079 / 0.140 / 1.577 & 0.862 / 0.965 / 0.073 / 0.114 / 1.640 & 0.840 / 0.960 / 0.074 / 0.072 / 1.654 \\
Defer-to-query-all & -- & -- & 0.791 / 0.935 / 0.088 / 0.057 / 1.581 & 0.791 / 0.935 / 0.088 / 0.057 / 1.581 \\
EDDI / Partial VAE & -- & 0.732 / 0.947 / 0.091 / 0.112 / 1.476 & 0.779 / 0.951 / 0.085 / 0.108 / 1.536 & 0.837 / 0.957 / 0.078 / 0.129 / 1.587 \\
GSM / RL AFA & -- & -- & 0.873 / 0.952 / 0.080 / 0.146 / 1.507 & 0.861 / 0.958 / 0.077 / 0.129 / 1.438 \\
GDFS / mutual information & -- & 0.873 / 0.978 / 0.060 / 0.126 / 1.664 & 0.884 / 0.973 / 0.057 / 0.092 / 1.708 & 0.886 / 0.973 / 0.057 / 0.090 / 1.713 \\
ACO nongreedy oracle & -- & 0.874 / 0.978 / 0.061 / 0.126 / 1.666 & 0.885 / 0.972 / 0.058 / 0.090 / 1.710 & 0.887 / 0.973 / 0.057 / 0.087 / 1.716 \\
SelectiveNet / confidence reject & 0.632 / 0.906 / 0.117 / 0.148 / 1.273 & 0.964 / 0.961 / 0.043 / 0.037 / 1.855 & 0.691 / 0.937 / 0.104 / 0.122 / 1.401 & 0.949 / 0.969 / 0.090 / 0.045 / 1.784 \\
Generic ReAct-style router & -- & 0.618 / 0.919 / 0.121 / 0.114 / 1.302 & -- & 0.620 / 0.908 / 0.122 / 0.103 / 1.303 \\
Supervised tool router & -- & -- & -- & 0.876 / 0.970 / 0.062 / 0.091 / 1.694 \\
BSLI raw uncalibrated & 0.150 / 0.632 / 0.191 / 0.032 / 0.560 & 0.150 / 0.632 / 0.191 / 0.032 / 0.560 & 0.150 / 0.632 / 0.191 / 0.032 / 0.560 & 0.150 / 0.632 / 0.191 / 0.032 / 0.560 \\
BSLI manual-$\lambda$ ablation & -- & -- & 0.966 / 0.965 / 0.071 / 0.027 / 1.833 & 0.961 / 0.967 / 0.069 / 0.024 / 1.834 \\
\textbf{BSLI learned calibrated} & -- & 0.924 / 0.965 / 0.068 / 0.041 / 1.780 & 0.983 / 0.966 / 0.068 / 0.014 / 1.867 & 0.983 / 0.966 / 0.068 / 0.014 / 1.867 \\
\bottomrule
\end{tabularx}
\end{table*}

\begin{table*}[t]
\centering
\caption{Expanded decision-oriented comparison. Definitions are in Appendix~\ref{app:metrics}.}
\label{tab:full_decision_table}
\scriptsize
\setlength{\tabcolsep}{3pt}
\renewcommand{\arraystretch}{1.08}
\begin{tabularx}{\textwidth}{@{}p{0.24\textwidth}rrrrrX@{}}
\toprule
System & Mean cost & $\AUBF\,\uparrow$ & $\GRisk@80\,\downarrow$ & FAR  & Seq. align. $\uparrow$ & Notes \\
\midrule
WW logistic & 0.000 & 1.440 & 0.900 & -- & -- & No query sequence. \\
Clean all-modality MLP & 4.100 & 1.808 & 0.106 & -- & -- & Full-cost reference. \\
Fixed ED-first (B1) & 1.000 & 1.577 & 0.876 & -- & -- & Hand-crafted workflow. \\
Fixed ED-first (B2) & 2.500 & 1.640 & 0.874 & -- & -- & \\
Fixed ED-first (B3) & 4.100 & 1.654 & 0.875 & -- & -- & \\
EDDI / Partial VAE (B1) & 1.093 & 1.476 & 0.239 & -- & 0.562 & Block-level AFA. \\
EDDI / Partial VAE (B2) & 1.841 & 1.536 & 0.206 & -- & 0.469 & \\
EDDI / Partial VAE (B3) & 2.975 & 1.587 & 0.173 & -- & 0.334 & \\
GDFS / MI (B1) & 1.161 & 1.664 & 0.115 & -- & 0.504 & Greedy MI acquisition. \\
GDFS / MI (B2) & 2.294 & 1.708 & 0.122 & -- & 0.365 & \\
GDFS / MI (B3) & 2.888 & 1.713 & 0.123 & -- & 0.332 & \\
ACO nongreedy (B1) & 1.157 & 1.666 & 0.114 & -- & 0.526 & Nongreedy AFA. \\
ACO nongreedy (B2) & 2.455 & 1.710 & 0.124 & -- & 0.355 & \\
ACO nongreedy (B3) & 3.276 & 1.716 & 0.130 & -- & 0.305 & \\
Supervised tool router & 3.319 & 1.604 & 0.143 & -- & 0.323 & Tool-routing baseline without BSLI posterior. \\
Generic ReAct-style router & 0.472 & 1.302 & 0.356 & -- & 0.056 & Generic reasoning-and-acting baseline. \\
% BSLI raw uncalibrated & 0.000 & 0.560 & 0.506 & 0.000 & 0.884 & Scale-collapse diagnostic. \\
\textbf{BSLI learned calibrated} & \textbf{2.179} & \textbf{1.867} & \textbf{0.105} & \textbf{0.435} & \textbf{0.599} & Final proposed method. \\
\bottomrule
\end{tabularx}
\end{table*}

% \clearpage
% \input{checklist}

\end{document}